\definecolor{ao}{rgb}{0.0, 0.7, 0.0}
\newtheorem{lemma}{Lemma}
\definecolor{pearThree}{HTML}{E74C3C}
\definecolor{pearcomp}{HTML}{B97E29}
\definecolor{pearDark}{HTML}{2980B9}
\definecolor{pearDarker}{HTML}{1D2DEC}
\def\eqref#1{equation~\ref{#1}}
\def\Eqref#1{Equation~\ref{#1}}
\def\1{\bm{1}}
\DeclareMathAlphabet{\mathsfit}{\encodingdefault}{\sfdefault}{m}{sl}
\SetMathAlphabet{\mathsfit}{bold}{\encodingdefault}{\sfdefault}{bx}{n}
\def\gI{{\mathcal{I}}}
\DeclareMathOperator*{\argmax}{arg\,max}
\DeclareMathOperator*{\argmin}{arg\,min}
\newcommand{\queue}{\mathcal{Q}\xspace}
\newcommand{\chn}{\mathcal{C}(z)\xspace}
\newcommand{\ov}[1]{\overline{#1}}
\newcommand{\whqphiBz}{\widehat q^{{\scalebox{0.9}{$\scriptscriptstyle \,\mathcal{B}$}}}_{\phi}(z)\xspace}
\newcommand{\whqphiBzprime}{\widehat q^{{\scalebox{0.9}{$\scriptscriptstyle \,\mathcal{B}$}}}_{\phi}(z')\xspace}
\newcommand{\sdiff}{s_\text{diff}}
\newcommand{\Sdiff}{S_\text{diff}}
\definecolor{darkgreen}{rgb}{0,0.5,0}
\definecolor{darkred}{rgb}{0.7,0,0}
\definecolor{teal}{rgb}{0.3,0.8,0.8}
\DeclarePairedDelimiter\abs{\lvert}{\rvert}%
\newcommand{\RANDOM}{\texttt{RANDOM}\xspace}
\newcommand{\DIAYNten}{\texttt{DIAYN-10}\xspace}
\newcommand{\DIAYNtwenty}{\texttt{DIAYN-20}\xspace}
\newcommand{\DIAYNfifty}{\texttt{DIAYN-50}\xspace}
\newcommand{\EDLten}{\texttt{EDL-10}\xspace}
\newcommand{\EDLtwenty}{\texttt{EDL-20}\xspace}
\newcommand{\EDLfifty}{\texttt{EDL-50}\xspace}
\newcommand{\SMMten}{\texttt{SMM-10}\xspace}
\newcommand{\SMMtwenty}{\texttt{SMM-20}\xspace}
\newcommand{\SMMfifty}{\texttt{SMM-50}\xspace}
\newcommand{\ALGOten}{\texttt{UPSIDE-10}\xspace}
\newcommand{\ALGOtwenty}{\texttt{UPSIDE-20}\xspace}
\newcommand{\ALGOfifty}{\texttt{UPSIDE-50}\xspace}
\newcommand{\DIAYNcurr}{\texttt{DIAYN-curr}\xspace}
\newcommand{\DIAYNhier}{\texttt{DIAYN-hier}\xspace}
\newcommand{\DIAYN}{\texttt{DIAYN}\xspace}
\newcommand{\VIC}{\texttt{VIC}\xspace}
\newcommand{\DADS}{\texttt{DADS}\xspace}
\newcommand{\EDL}{\texttt{EDL}\xspace}
\newcommand{\VALOR}{\texttt{VALOR}\xspace}
\newcommand{\DISCERN}{\texttt{DISCERN}\xspace}
\newcommand{\SKEWFIT}{\texttt{Skew-Fit}\xspace}
\newcommand{\SMM}{\texttt{SMM}\xspace}
\newcommand{\ALGO}{\textup{\texttt{UPSIDE}}\xspace}
\newcommand{\TDthree}{\textup{\texttt{TD3}}\xspace}
\newcommand{\SAC}{\textup{\texttt{SAC}}\xspace}
\newcommand{\ICM}{\textup{\texttt{ICM}}\xspace}
\newcommand{\cS}{\mathcal{S}}
\newcommand{\cA}{\mathcal{A}}
\newcommand{\cT}{\mathcal{T}}
\newcommand\footnoteref[1]{\protected@xdef\@thefnmark{\ref{#1}}\@footnotemark}
\tikzset{
  treenode/.style = {align=center, inner sep=0pt, text centered,
    font=\sffamily},
  arn_n/.style = {treenode, circle, white, font=\sffamily\bfseries, draw=black,
    fill=black, text width=1.5em},
  arn_r/.style = {treenode, circle, red, draw=red, 
    text width=1.5em, very thick},
  arn_x/.style = {treenode, rectangle, draw=black,
    minimum width=0.5em, minimum height=0.5em}
}
\def\@fnsymbol#1{\ensuremath{\ifcase#1\or *\or \dagger\or \ddagger\or
  \mathsection\or \mathparagraph\or \|\or \diamond \or **\or \dagger\dagger
  \or \ddagger\ddagger \else\@ctrerr\fi}}
\newcommand{\printfnsymbol}[1]{%
  \textsuperscript{\@fnsymbol{#1}}%
}
\title{Direct then Diffuse: \\ Incremental Unsupervised Skill Discovery \\ for State Covering and Goal Reaching}
\author{Pierre-Alexandre Kamienny\thanks{equal contribution}~\,$^{1,2}$, ~ Jean Tarbouriech\printfnsymbol{1}$^{1,3}$, \\ \textbf{Sylvain Lamprier$^{2}$, ~ Alessandro Lazaric$^{1}$, ~ Ludovic Denoyer\thanks{Now at Ubisoft La Forge \newline {\scriptsize  \texttt{\{pakamienny,jtarbouriech,lazaric\}@fb.com, sylvain.lamprier@isir.upmc.fr, ludovic.den@gmail.com}}}~\,$^{1}$} \\
$^{1}$\,Meta AI ~ $^{2}$\,Sorbonne University, LIP6/ISIR ~ $^{3}$\,Inria Scool}
\begin{document}

\maketitle

\doparttoc 
\faketableofcontents 

\vspace{-0.05in}

\begin{abstract}
Learning meaningful behaviors in the absence of reward is a difficult problem in reinforcement learning. A desirable and challenging unsupervised objective is to learn a set of diverse skills that provide a thorough coverage of the state space while being directed, i.e., reliably reaching distinct regions of the environment. 
In this paper, we build on the mutual information framework for skill discovery and introduce \ALGO, which addresses the coverage-directedness trade-off in the following ways: \textbf{1)} We design policies with a decoupled structure of a directed skill, trained to reach a specific region, followed by a diffusing part that induces a local coverage. \textbf{2)} We optimize policies by  maximizing their number under the constraint that each of them reaches distinct regions of the environment (i.e., they are sufficiently discriminable) and prove that this serves as a lower bound to the original mutual information objective. \textbf{3)} Finally, we compose the learned directed skills into a growing tree that adaptively covers the environment. We illustrate in several navigation and control environments how the skills learned by \ALGO solve sparse-reward downstream tasks better than existing baselines.
\end{abstract}

\captionsetup[figure]{labelfont=small,font=small}
\captionsetup[table]{labelfont=small,font=small}

\vspace{-0.05in}
\section{Introduction} \label{sec_intro}

Deep reinforcement learning (RL) algorithms have been shown to effectively solve a wide variety of complex problems \citep[e.g.,][]{mnih2015human, bellemare2013arcade}. However, they are often designed to solve one single task at a time and they need to restart the learning process from scratch for any new problem, even when it is defined on the very same environment (e.g., a robot navigating to different locations in the same apartment). 
Recently, Unsupervised RL (URL) has been proposed as an approach to address this limitation. In URL, the agent first interacts with the environment without any extrinsic reward signal. Afterward, the agent leverages the experience accumulated during the unsupervised learning phase to efficiently solve a variety of downstream tasks defined on the same environment. This approach is particularly effective in problems such as navigation~\citep[see e.g.,][]{bagaria2021skill} and robotics~\citep[see e.g.,][]{pong2019skew} where the agent is often required to readily solve a wide range of tasks while the dynamics of environment remains fixed.

In this paper, we focus on the unsupervised objective of discovering a set of skills that can be used to efficiently solve sparse-reward downstream tasks. In particular, we build on the insight that \textit{mutual information} (MI) between the skills' latent variables and the states reached by them can formalize the dual objective of learning policies that both cover and navigate the environment efficiently. 
Indeed, maximizing MI has been shown to be a powerful approach for encouraging exploration in RL \citep{houthooft2016vime, mohamed2015variational} and for unsupervised skill discovery \citep[e.g.,][]{gregor2016variational, eysenbach2018diversity, achiam2018variational, sharma2019dynamics, campos2020explore}. Nonetheless, learning policies that maximize MI is a challenging optimization problem. Several approximations have been proposed to simplify it at the cost of possibly deviating from the original objective of coverage and directedness (see Sect.\,\ref{sec_relatedwork} for a review of related work).

\begin{figure*}[t]
\begin{center}
    \includegraphics[draft=false,width=0.99\linewidth]{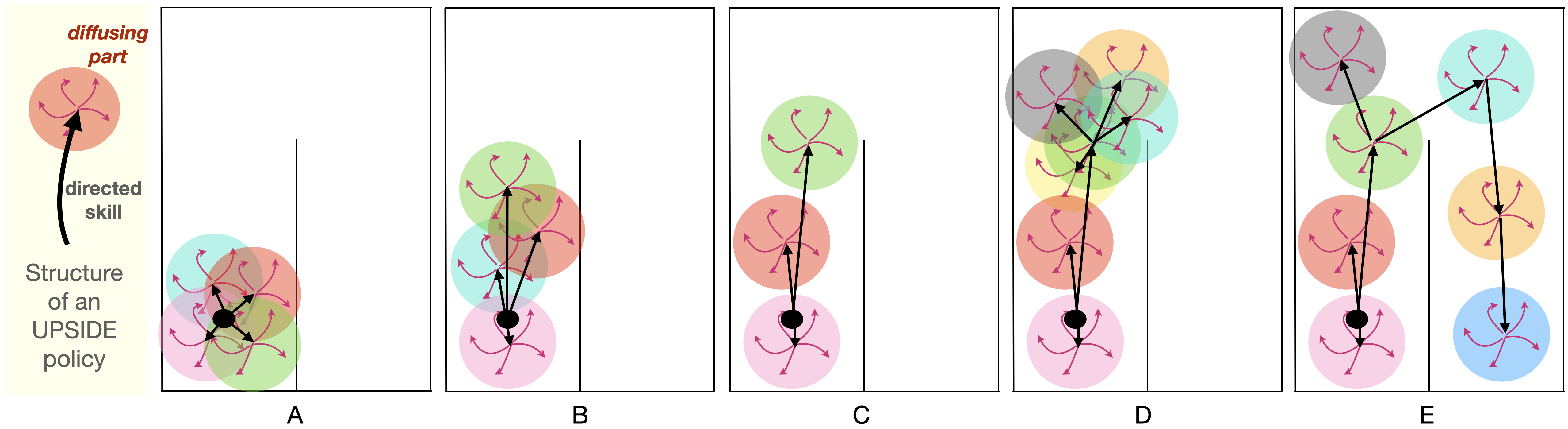}
\end{center}
\vspace{-0.14in}
\caption{{\small Overview of \ALGO. The black dot corresponds to the initial state. \textit{(A)} A set of random policies~is initialized, each policy being composed of a \textit{directed} part called \textit{skill} (illustrated as a black arrow) and a \textit{diffusing} part (red arrows) which induces a local coverage (colored circles). \textit{(B)} The skills are then updated to maximize the discriminability of the states reached by their corresponding diffusing part (Sect.~\ref{subsect_skill_optim}). \textit{(C)}~The least discriminable policies are iteratively removed while the remaining policies are re-optimized. This is executed until the discriminability of each policy satisfies a given constraint (Sect.~\ref{subsection_optimization}). In this example two policies are consolidated. \textit{(D)} One of these policies is used as basis to add new policies, which are then optimized following the same procedure. For the ``red'' and ``purple'' policy, \ALGO is not able to find sub-policies of sufficient quality and thus they are not expanded any further. \textit{(E)} At the end of the process, \ALGO has created a tree of policies covering the state space, with skills as edges and diffusing parts as nodes (Sect.~\ref{subsect_TREE_skill_support_sampling_rule}). 
\vspace{-0.12in}
}} 
\label{fig:main_alg}
\end{figure*}

In this paper, we propose \ALGO  (\textit{UnsuPervised Skills that dIrect then DiffusE}) to learn a set of policies that can be effectively used to cover the environment and solve goal-reaching downstream tasks. Our solution builds on the following components (Fig.\,\ref{fig:main_alg}):
\begin{itemize}[leftmargin=.15in,topsep=-1.5pt,itemsep=1pt,partopsep=0pt, parsep=0pt]
	\item \textit{Policy structure (Sect.\,\ref{subsect_skill_optim}, see Fig.\,\ref{fig:main_alg} (A)).} We consider policies composed of two parts: \textbf{1)} a \textit{directed} part, referred to as the \textit{skill}, that is trained to reach a specific region of the environment, and \textbf{2)} a \textit{diffusing} part that induces a local coverage around the region attained by the first part. This structure favors coverage and directedness at the level of a single policy. 
	\item \textit{New constrained objective (Sect.\,\ref{subsection_optimization}, see Fig.\,\ref{fig:main_alg} (B) \& (C)).} We then introduce a constrained optimization problem designed to maximize the number of policies under the constraint that the states reached by \textit{each} of the diffusing parts are distinct enough (i.e., they satisfy a minimum level of discriminability). We prove that this problem can be cast as a lower bound to the original MI objective, thus preserving its coverage-directedness trade-off. \ALGO solves it by \emph{adaptively} adding or removing policies to a given initial set,
	without requiring any prior knowledge on a sensible number of policies. 
	\item \textit{Tree structure (Sect.\,\ref{subsect_TREE_skill_support_sampling_rule}, see Fig.\,\ref{fig:main_alg} (D) \& (E)).} Leveraging the directed nature of the skills, \ALGO  effectively composes them to build longer and longer policies organized in a tree structure. This overcomes the need of defining a suitable policy length in advance. 
	Thus in \ALGO we can 
	consider short policies to make the optimization easier, while composing their 
	skills along a growing tree structure to ensure an adaptive and thorough coverage of the environment. 
\end{itemize}
The combination of these components allows \ALGO to effectively adapt the number and the length of policies to the specific structure of the environment, while learning policies that ensure coverage and directedness. We study the effectiveness of \ALGO and the impact of its components in hard-to-explore continuous navigation and control environments, where \ALGO improves over existing baselines both in terms of exploration and learning performance.

\section{Setting} \label{sec_algo}

We consider the URL setting where the agent interacts with a Markov decision process (MDP) $M$ with state space $\cS$, action space $\cA$, dynamics $p(s'|s,a)$, and \textbf{no reward}. The agent starts each episode from a designated initial state $s_0 \in \cS$.\footnote{More generally, $s_0$ could be drawn from any distribution supported over a compact region.} Upon termination of the chosen policy, the agent is then reset to $s_0$. This setting is particularly challenging from an exploration point of view since the agent cannot rely on the initial distribution to cover the state space. 

We recall the MI-based unsupervised skill discovery approach \citep[see e.g.,][]{gregor2016variational}. Denote by $Z$ some (latent) variables on which the policies of length $T$ are conditioned (we assume that $Z$ is categorical for simplicity and because it is the most common case in practice). There are three optimization variables: \textbf{(i)}~the cardinality of $Z$ denoted by $N_Z$, i.e., the number of policies (we write $Z = \{ 1, \ldots, N_Z \} = [N_Z]$), \textbf{(ii)}~the parameters $\pi(z)$ of the policy indexed by $z \in Z$, and \textbf{(iii)}~the policy sampling distribution $\rho$ (i.e., $\rho(z)$ is the probability of sampling policy $z$ at the beginning of the episode). 
Denote policy $z$'s action distribution in state $s$ by $\pi(\cdot \vert z, s)$ and the entropy function by $\mathcal{H}$. Let the variable $S_T$ be the random (final) state induced by sampling a policy $z$ from $\rho$ and executing $\pi(z)$ from $s_0$ for an episode. Denote by $p_{\pi(z)}(s_T)$ the distribution over (final) states induced by executing policy $z$, by $p(z \vert s_T)$ the probability of $z$ being the policy to induce (final) state $s_T$, and let $\overline{p}(s_T) = \sum_{z \in Z} \rho(z) p_{\pi(z)}(s_T)$. Maximizing the MI between $Z$ and $S_T$ can be written as $\max_{N_Z, \,\rho, \,\pi} \mathcal{I}(S_T; Z)$, where
\begin{align} \nonumber
    \mathcal{I}(S_T; Z) &=  \mathcal{H}(S_T) -  \mathcal{H}(S_T \vert Z) = - \sum_{s_T} \overline{p}(s_T) \log \overline{p}(s_T) + \sum_{z \in Z} \rho(z) \mathbb{E}_{s_T \vert z}\left[ \log p_{\pi(z)}(s_T) \right] \\
	&= \mathcal{H}(Z) - \mathcal{H}(Z \vert S_T) = - \sum_{z \in Z} \rho(z) \log \rho(z) + \sum_{z \in Z} \rho(z) \mathbb{E}_{s_T \vert z}\left[ \log p(z \vert s_T) \right], \label{eq_MI}
\end{align}
where in the expectations $s_T \vert z \sim p_{\pi(z)}(s_T)$. In the first formulation, the entropy term over states captures the requirement that policies thoroughly cover the state space, while the second term measures the entropy over the states reached by each policy and thus promotes policies that have a directed behavior.  Learning the optimal $N_Z$, $\rho$, and $\pi$ to maximize \Eqref{eq_MI} is a challenging problem and several approximations have been proposed~\citep[see e.g.,][]{gregor2016variational, eysenbach2018diversity, achiam2018variational, campos2020explore}. Many approaches focus on the so-called \textit{reverse} formulation of the MI (second line of \Eqref{eq_MI}). In this case, the conditional distribution $p(z \vert s_T)$ is usually replaced with a parametric model $q_{\phi}(z \vert s_T)$ called the \textit{discriminator} that is trained via a negative log likelihood loss simultaneously with all other variables. Then one can maximize the lower bound \citep{agakov2004algorithm}: $\mathcal{I}(S_T ; Z) \geq \mathbb{E}_{z \sim \rho(z), \tau \sim \pi(z)} \left[ \log q_{\phi}(z \vert s_{T}) - \log \rho(z) \right]$, where we denote by $\tau \sim \pi(z)$ trajectories sampled from the policy indexed by $z$. As a result, each policy $\pi(z)$ can be trained with RL to maximize the intrinsic reward $r_z(s_T) := \log q_{\phi}(z \vert s_T) - \log \rho(z)$. 

\section{The \textup{\texttt{UPSIDE}} Algorithm} \label{sec_UPSIDE}

In this section we detail the three main components of \ALGO, which is summarized in Sect.\,\ref{ssec:tree}. 

\vspace{-0.05in}
\subsection{Decoupled Policy Structure of Direct-then-Diffuse} \label{subsect_skill_optim}
\vspace{-0.05in}

While the trade-off between coverage and directedness is determined by the MI objective, the amount of stochasticity of each policy (e.g., injected via a regularization on the entropy over the actions) has also a major impact on the effectiveness of the overall algorithm \citep{eysenbach2018diversity}. In fact, while randomness can promote broader coverage, a highly stochastic policy tends to induce a distribution $p_{\pi(z)}(s_T)$ over final states with high entropy, thus increasing $\mathcal{H}(S_T|Z)$ and losing in directedness. In \ALGO, we define policies with a \textit{decoupled structure} (see Fig.\,\ref{fig:main_alg}~(A)) composed of \textbf{a)} a \textit{directed} part (of length $T$) that we refer to as \textit{skill}, with low stochasticity and trained to reach a specific region of the environment and \textbf{b)} a \textit{diffusing} part (of length $H$) with high stochasticity to promote local coverage of the states around the region reached by the skill.

\vspace{-0.06in}
\begin{figure}[H]
\begin{minipage}{0.59\linewidth} Coherently with this structure, the state variable in the conditional entropy in \Eqref{eq_MI} becomes any state reached during the diffusing part (denote by $\Sdiff$ the random variable) and not just the skill's terminal state. Following Sect.\,\ref{sec_algo} we define an intrinsic reward $r_z(s) = \log q_{\phi}(z \vert s) - \log \rho(z)$ and the skill of policy $z$ maximizes the cumulative reward over the states traversed by the diffusing part. Formally, we can conveniently define the objective function:

\vspace{-0.2in}
\begin{align} \label{eq_total_reward}
	\max_{\pi(z)} \mathbb{E}_{\tau \sim \pi(z)} \Big[ \sum_{t \in \mathcal{J}} \alpha \cdot r_z(s_t) + \beta \cdot \mathcal{H}(\pi(\cdot \vert z, s_t))   \Big], 
\end{align}

\vspace{0.1in}
\end{minipage}%
\hfill%
\begin{minipage}{0.37\linewidth}
\vspace{-0.2in}
\scriptsize
\begin{flushright}
\renewcommand*{\arraystretch}{1.1}
\begin{tabular}{ |c|c|c|c|c| } 
\hline
 & { \scriptsize \hspace{-0.1in} \makecell{\ALGO \\ directed \\ skill} \hspace{-0.1in}} & { \hspace{-0.1in} \scriptsize \makecell{\ALGO \\ diffusing \\ part} \hspace{-0.1in}}  & {\scriptsize\hspace{-0.05in}\makecell{\VIC \\ policy}\hspace{-0.05in}} & { \scriptsize \hspace{-0.1in} \makecell{\DIAYN \\ policy} \hspace{-0.1in}} \\
\hline
\hspace{-0.1in} \makecell{{\scriptsize state} \\ {\scriptsize variable}} \hspace{-0.1in} & $S_{\text{diff}}$ & $S_{\text{diff}}$ & \hspace{-0.1in}$S_T$\hspace{-0.1in} & $S$ \\ 
\hline 
$\mathcal{J}$ & \hspace{-0.1in} { \scriptsize \makecell{$\{T,\ldots,$ \\ \quad$T + H \}$}}  \hspace{-0.1in} &  {\hspace{-0.1in}\scriptsize \makecell{$\{T,\ldots,$ \\ \quad$T + H \}$}\hspace{-0.08in}} 
 & {\hspace{-0.1in}\scriptsize$\{T\}$\hspace{-0.1in}}  & {\hspace{-0.08in}\scriptsize$\{1, ..., T\}$\hspace{-0.08in}} \\
\hline
\hspace{-0.1in}$(\alpha, \beta)$\hspace{-0.1in} & $(1, 0)$ & $(0, 1)$ & \hspace{-0.1in}$(1, 0)$\hspace{-0.1in} & $(1, 1)$ \\ 
\hline  
\end{tabular}
\end{flushright}
\vspace{-0.12in}
\captionof{table}{{\small Instantiation of \Eqref{eq_total_reward} for each part of an \ALGO policy, and for \VIC \citep{gregor2016variational} and \DIAYN \citep{eysenbach2018diversity} policies.}}
\label{table_comp}
\end{minipage}
\vspace{-0.3in}
\end{figure}
where $\mathcal{J} = \{T,\ldots,T + H \}$ and $\alpha=1, \beta=0$ (resp.\,$\alpha=0, \beta=1$) when optimizing for the skill (resp. diffusing part). In words, the skill is incentivized to bring the diffusing part to a discriminable region of the state space, while the diffusing part is optimized by a simple random walk policy (i.e., a stochastic policy with uniform distribution over actions) to promote local coverage around $s_T$. 

Table~\ref{table_comp} illustrates how \ALGO's policies compare to other methods. Unlike \VIC and similar to \DIAYN, the diffusing parts of the policies tend to ``push'' the skills away so as to reach diverse regions of the environment. The combination of the directedness of the skills and local coverage of the diffusing parts thus ensures that the whole environment can be properly visited with $N_Z \ll \abs{\mathcal{S}}$ policies.\footnote{\Eqref{eq_MI} is maximized by setting $N_Z = \abs{\mathcal{S}}$ (i.e.,  $\max_Y \mathcal{I}(X,Y) = \mathcal{I}(X, X) = \mathcal{H}(X)$), where each $z$ represents a goal-conditioned policy reaching a different state, which implies having as many policies as states, thus making the learning particularly challenging.} 
Furthermore, the diffusing part can be seen as defining a \textit{cluster of states} that represents the goal region of the directed skill. This is in contrast with \DIAYN policies whose stochasticity may be spread over the whole trajectory. This allows us to ``ground'' the latent variable representations of the policies $Z$ to specific regions of the environment (i.e., the clusters). As a result, maximizing the MI $\mathcal{I}(S_{\text{diff}}; Z)$ can be seen as learning a set of ``cluster-conditioned'' policies. 

\subsection{A Constrained Optimization Problem}
\label{subsection_optimization}

In this section, we focus on how to optimize the number of policies~$N_Z$ and the policy sampling distribution $\rho(z)$.
The standard practice for \Eqref{eq_MI} is to preset a fixed number of policies $N_Z$ and to fix the distribution $\rho$ to be uniform (see e.g., \citealp{eysenbach2018diversity, baumli2020relative, strouse2021learning}). However, using a uniform $\rho$ over a fixed number of policies may be highly suboptimal, in particular when $N_Z$ is not carefully tuned. In App.\,\ref{simple_scenario} we give a simple example and a theoretical argument on how the MI can be ensured to increase by removing skills with low discriminability when $\rho$ is uniform.
Motivated by this observation, in \ALGO we focus on \textit{maximizing the number of policies that are sufficiently discriminable}. We fix the sampling distribution~$\rho$ to be uniform over $N$ policies and define the following constrained optimization problem
\vspace{-0.01in}
\begin{align}
    &\max_{N \geq 1} ~ N \quad ~ \textrm{s.t.~} \quad g(N) \geq \log \eta, \quad ~ \textrm{where~} \quad g(N) := \max_{\pi, \phi} \min_{z \in [N]} \mathbb{E}_{s_\text{diff}}\left[ \log q_{\phi}(z \vert s_\text{diff}) \right],  \tag{$\mathcal{P}_{\eta}$} \label{final_opt_problem}
\end{align}
where $q_{\phi}(z|\sdiff)$ denotes the probability of $z$ being the policy traversing $\sdiff$ during its diffusing part according to the discriminator and $\eta \in (0, 1)$ defines a \textit{minimum discriminability threshold}. By optimizing for (\ref{final_opt_problem}), \ALGO \textit{automatically adapts} the number of policies to promote coverage, while still guaranteeing that each policy reaches a distinct region of the environment.  
Alternatively, we can interpret~(\ref{final_opt_problem}) under the lens of \textit{clustering}: the aim is to find the largest number of clusters (i.e., the region reached by the directed skill and covered by its associated diffusing part) with a sufficient level of inter-cluster distance (i.e., discriminability) (see Fig.\,\ref{fig:main_alg}). The following lemma (proof in App.~\ref{proofs}) formally links the constrained problem $(\mathcal{P}_{\eta})$ back to the original MI objective.

\newcommand{\lemmaalternativeformulation}{
    There exists a value $\eta^{\dagger} \in (0, 1)$ such that solving $(\mathcal{P}_{\eta^{\dagger}})$ is equivalent to maximizing a lower bound on the mutual information objective $\max_{\, N_{\scalebox{0.9}{$\scriptscriptstyle Z$}}, \rho, \pi, \phi} \mathcal{I}(S_\textup{diff}; Z)$.
}

\begin{lemma}
  \label{lemma_alternative_formulation}
  \lemmaalternativeformulation
\end{lemma}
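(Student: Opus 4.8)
The plan is to exhibit, for a suitable threshold $\eta^\dagger$, an explicit correspondence between feasible points of $(\mathcal{P}_{\eta^\dagger})$ and a lower bound on $\mathcal{I}(S_\textup{diff}; Z)$ that is tight at the optimum of the constrained problem. First I would recall that, with $\rho$ uniform over $N$ policies, the Agakov-style variational bound from Section~\ref{sec_algo} gives
\begin{align} \label{eq_plan_bound}
    \mathcal{I}(S_\textup{diff}; Z) \;\geq\; \mathbb{E}_{z\sim\rho}\,\mathbb{E}_{s_\textup{diff}\vert z}\big[\log q_\phi(z\vert s_\textup{diff})\big] + \log N \;=:\; \mathcal{L}(N,\phi),
\end{align}
since $-\mathbb{E}_z[\log\rho(z)] = \log N$. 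The right-hand side $\mathcal{L}(N,\phi)$ is the lower bound we will maximize over $N$. The key observation is that $\mathbb{E}_{z\sim\rho}\mathbb{E}_{s_\textup{diff}\vert z}[\log q_\phi(z\vert s_\textup{diff})] = \frac{1}{N}\sum_{z\in[N]} \mathbb{E}_{s_\textup{diff}\vert z}[\log q_\phi(z\vert s_\textup{diff})]$, an average of the per-policy discriminability terms that appear in the constraint of $(\mathcal{P}_{\eta})$.

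Next I would set up the equivalence. Define $g(N) := \min_{z\in[N]} \mathbb{E}_{s_\textup{diff}}[\log q_\phi(z\vert s_\textup{diff})]$ for the optimized policies/discriminator at cardinality $N$; feasibility of $N$ in $(\mathcal{P}_\eta)$ means $g(N) \geq \log\eta$. The idea is that decreasing $\eta$ relaxes the constraint and (weakly) increases the optimal $N^\star(\eta)$, while the lower bound $\mathcal{L}(N,\phi)$ trades off the $+\log N$ coverage gain against the typically decreasing average discriminability as more (necessarily less distinguishable) policies are packed in. I would argue that there is a value $\eta^\dagger$ at which the $N$ maximizing $\mathcal{L}$ coincides with the largest $N$ feasible for $(\mathcal{P}_{\eta^\dagger})$: concretely, take $N^\dagger \in \argmax_N \mathcal{L}(N,\phi)$ (assume it is finite, which holds since the average log-discriminability is bounded above by $0$ and in the relevant regime decays fast enough relative to $\log N$ — this is exactly the phenomenon discussed in App.~\ref{simple_scenario}), and set $\eta^\dagger := \exp\big(g(N^\dagger)\big) \in (0,1)$. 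With this choice, $N^\dagger$ is feasible for $(\mathcal{P}_{\eta^\dagger})$ by construction, and any $N > N^\dagger$ either violates $g(N)\geq\log\eta^\dagger$ (hence infeasible) or, if still feasible, does not increase $\mathcal{L}$ by optimality of $N^\dagger$; in the latter case one can show it also does not decrease it, so solving $(\mathcal{P}_{\eta^\dagger})$ — i.e., returning the largest feasible $N$ — returns a maximizer of the lower bound $\mathcal{L}$. Thus solving $(\mathcal{P}_{\eta^\dagger})$ is equivalent to maximizing $\mathcal{L}(N,\phi)$, which by \eqref{eq_plan_bound} is a lower bound on $\max_{N_Z,\rho}\mathcal{I}(S_\textup{diff};Z)$.

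The main obstacle I anticipate is making rigorous the monotonicity/single-crossing behavior that guarantees "the largest feasible $N$ equals the $\mathcal{L}$-maximizing $N$" — in full generality the average discriminability need not be monotone in $N$, so the clean equivalence requires either an assumption (e.g., that the best $N$-policy configuration has non-increasing average log-discriminability in $N$, which is natural since adding policies crowds the state space) or a slightly weaker statement phrased as "there exists $\eta^\dagger$ such that the optimum of $(\mathcal{P}_{\eta^\dagger})$ attains the maximum of the lower bound." I would handle this by stating the needed regularity explicitly and pointing to the worked example in App.~\ref{simple_scenario} as justification, then carrying out the two-inequality argument (feasibility gives $\mathcal{L}(N^\dagger,\phi) \leq \max_N\mathcal{L}$; optimality of $N^\dagger$ and the definition of $\eta^\dagger$ give the reverse) to conclude. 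The remaining steps — plugging in $\rho$ uniform, identifying $-\mathbb{E}[\log\rho] = \log N$, and invoking the Agakov bound — are routine.
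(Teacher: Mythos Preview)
Your overall scaffolding---Agakov bound, define $N^\dagger$ as the maximizer, set $\eta^\dagger := \exp g(N^\dagger)$, then a two-inequality argument---matches the paper. The gap is in \emph{which} lower bound you maximize. You keep the \emph{average} per-policy term in $\mathcal{L}(N,\phi) = \log N + \tfrac{1}{N}\sum_{z}\mathbb{E}[\log q_\phi(z\vert s_\textup{diff})]$, while the constraint in $(\mathcal{P}_\eta)$ is on the \emph{minimum} $g(N)$. This mismatch is exactly where your argument breaks: from feasibility of $N^\star$ you get $g(N^\star)\ge g(N^\dagger)$ and $N^\star\ge N^\dagger$, but you cannot conclude that the average discriminability at $N^\star$ is at least that at $N^\dagger$---a larger minimum says nothing about the average. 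Your sentence ``in the latter case one can show it also does not decrease it'' is the unjustified step, and the monotonicity assumption you propose (non-increasing average in $N$) actually points the wrong way for what you need.

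The paper avoids this entirely by inserting one extra inequality before defining the bound: a weighted average is at least its minimum, so
\[
-\mathcal{H}(Z\vert S_\textup{diff}) \;\ge\; \sum_z \rho(z)\,\mathbb{E}_{s_\textup{diff}}[\log q_\phi(z\vert s_\textup{diff})] \;\ge\; \min_{z}\,\mathbb{E}_{s_\textup{diff}}[\log q_\phi(z\vert s_\textup{diff})] \;=\; g(N).
\]
Now the lower bound is $f(N)+g(N)$ with $f(N)=\log N$, built from the \emph{same} $g$ that appears in the constraint. Two bonuses follow immediately: (i) since $g(N)$ no longer depends on $\rho$, maximizing over $\rho$ gives $\mathcal{H}(Z)=\log N$ and justifies the uniform choice rather than assuming it; and (ii) the two-inequality argument becomes clean without any monotonicity hypothesis: $N^\dagger\in\argmax_N\{f(N)+g(N)\}$, $\eta^\dagger=\exp g(N^\dagger)$, and for the $(\mathcal{P}_{\eta^\dagger})$-optimum $N^\star$ one has $g(N^\star)\ge g(N^\dagger)$ (feasibility) and $f(N^\star)\ge f(N^\dagger)$ (since $N^\star\ge N^\dagger$ and $f$ is increasing), hence $f(N^\star)+g(N^\star)\ge f(N^\dagger)+g(N^\dagger)$, forcing equality. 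Insert the ``average $\ge$ min'' step and your proof goes through verbatim; without it, the equivalence claim does not hold.
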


Since $(\mathcal{P}_{\eta^{\dagger}})$ is a lower bound to the MI, optimizing it ensures that the algorithm does not deviate too much from the dual covering and directed behavior targeted by MI maximization. Interestingly, Lem.\,\ref{lemma_alternative_formulation} provides a rigorous justification for using a uniform sampling distribution \textit{restricted to the $\eta$-discriminable policies}, which is in striking contrast with most of MI-based literature, where a uniform sampling distribution $\rho$ is defined over the predefined number of policies. 

In addition, our alternative objective $(\mathcal{P}_{\eta})$ has the benefit of providing a simple \textit{greedy} strategy to optimize the number of policies $N$. Indeed, the following lemma (proof in App.~\ref{proofs}) ensures that starting with $N=1$ (where $g(1) = 0$) and increasing it until the constraint $g(N) \geq \log \eta$ is violated is guaranteed to terminate with the optimal number of policies. 

\newcommand{\lemmagdecreasing}{
     The function $g$ is non-increasing in $N$.
}

\begin{lemma}
  \label{lemma_g_decreasing}
  \lemmagdecreasing
\end{lemma}

\subsection{Composing Skills in a Growing Tree Structure}
\label{subsect_TREE_skill_support_sampling_rule}

Both the original MI objective and our constrained formulation (\ref{final_opt_problem}) depend on the initial state~$s_0$ and on the length of each policy. Although these quantities are usually predefined and only appear implicitly in the equations, they have a crucial impact on the obtained behavior. In fact, resetting after each policy execution unavoidably restricts the coverage to a radius of at most $T+H$ steps around~$s_0$. This may suggest to set $T$ and $H$ to large values. However, increasing $T$ makes training the skills more challenging, while increasing $H$ may not be sufficient to improve coverage.

Instead, we propose to ``extend'' the length of the policies through composition. We rely on the key insight that \textit{the constraint in~(\ref{final_opt_problem}) guarantees that the directed skill of each $\eta$-discriminable policy  reliably reaches a specific (and distinct) region of the environment and it is thus re-usable and amenable to composition}. We thus propose to chain the skills so as to reach further and further parts of the state space. Specifically, we build a growing tree, where the root node is a diffusing part around $s_0$, \textit{the edges represent the skills, and the nodes represent the diffusing parts}. When a policy~$z$ is selected, the directed skills of its predecessor policies in the tree are executed first (see Fig.\,\ref{fig:tree-example} in App.\,\ref{app_algo} for an illustration). Interestingly, this growing tree structure builds a curriculum on the episode lengths which grow as the sequence $(i T + H)_{i \geq 1}$, thus avoiding the need of prior knowledge on an adequate horizon of the downstream
tasks.\footnote{See e.g., the discussion in \citet{mutti2020policy} on the \say{importance of properly choosing the training horizon in accordance with the downstream-task horizon the policy will eventually face.}} Here this knowledge is replaced by $T$ and $H$ which are more environment-agnostic and task-agnostic choices as they rather have an impact on the size and shape of the learned tree (e.g., the smaller $T$ and $H$ the bigger the~tree).

\colorlet{darkpurple}{purple!60!black}
\newcommand{\POL}{\textcolor{darkpurple}{\textsc{PolicyLearning}}}

\makeatletter
\begin{minipage}{0.51\linewidth}
\vspace{0.05in}
\subsection{Implementation}
\label{ssec:tree}
\vspace{0.05in}
We are now ready to introduce the \ALGO algorithm, which provides a specific implementation of the components described before (see Fig.\,\ref{fig:main_alg} for an illustration, Alg.\,\ref{main_alg_short} for a short pseudo-code and Alg.\,\ref{main_alg_detailed} in App.\,\ref{app_algo} for the detailed version). We first make
approximations so that the constraint in~(\ref{final_opt_problem}) is easier to estimate. We remove the logarithm from the constraint to have an estimation range of $[0, 1]$ and thus lower variance.\footnotemark\global\let\saved@Href@A\Hy@footnote@currentHref~We also replace the expectation over $s_\text{diff}$ with an empirical estimate $\whqphiBz=\frac{1}{\abs{\mathcal{B}_{z}}}\sum_{s \in \mathcal{B}_{z}} q_\phi(z|s)$, where $\mathcal{B}_{z}$ denotes a small replay buffer, which we call \textit{state buffer}, that contains states collected during a few rollouts by the diffusing part of $\pi_z$. In our experiments, we take $B=|\mathcal{B}_{z}|=10H$. 
Integrating this in~(\ref{final_opt_problem}) leads~to 
\begin{align}
	\max_{N\geq 1} ~ N \quad ~ \textrm{s.t.~} \quad \max_{\pi,\phi} \min_{ z \in [N]} \whqphiBz \geq \eta, \label{new_opt_problem_approx}
\end{align}

\vspace{-0.07in}
where $\eta$ is an hyper-parameter.\footnotemark\global\let\saved@Href@B\Hy@footnote@currentHref~
From Lem.\,\ref{lemma_g_decreasing}, this optimization problem in $N$ can be solved using the incremental policy addition or removal in Alg.\,\ref{fig:main_alg} (lines~\ref{line-addition} \& \ref{line-removal}),  independently from the number of initial policies~$N$.
\end{minipage}%
\hfill%
\begin{minipage}{0.44\linewidth}
\vspace{-0.02in}
\begin{algorithm}[H]
	\begin{small}
		\SetAlgoLined
		\LinesNumbered
		\DontPrintSemicolon
		\textbf{Parameters}: Discriminability threshold $\eta \in (0,1)$, branching factor $N^\text{start}, N^\text{max}$.\\
		\textbf{Initialize}: Tree $\cT$ initialized as a root node $0$, policies candidates $\queue = \{ 0 \}$.\\
		\While(\tcp*[h]{tree expansion}){\textup{ $\queue \neq \emptyset$ }}{
			Dequeue a policy $z \in \queue$ and create $N=N^\text{start}$  policies $\chn$.\\
            \POL($\mathcal{T}$, $\mathcal{C}(z)$). \label{line-dequeue} \\
			\If(\tcp*[h]{Node addition} \label{line-if-okay}){ \textup{$\min_{z' \in \chn} \whqphiBzprime > \eta$}}
			{  
                \While{\textup{$\min_{z' \in \chn} \whqphiBzprime > \eta$} \quad \quad \textup{\textbf{and}} $N< N^{\textup{max}}$}
                {
                    Increment $N = N+1$ and add one policy to $\mathcal{C}(z)$. \label{line-addition} \\
                    \POL($\mathcal{T}$, $\mathcal{C}(z)$).
                }
			}
			\Else(\tcp*[h]{Node removal} \label{line-if-not-okay}){
                \While{\textup{$\min_{z' \in \chn} \whqphiBzprime < \eta$} \quad \quad \textup{\textbf{and}} $N> 1$}
                {
                    Reduce $N = N-1$ and remove least discriminable policy from~$\mathcal{C}(z)$. \label{line-removal} \\
                    \POL($\mathcal{T}$, $\mathcal{C}(z)$).
                }
            }

			\vspace{0.05in}
			Add $\eta$-discriminable policies $\chn$ to $\queue$, and  to $\mathcal{T}$ as nodes rooted at $z$. \label{line-enqueue} 
		}
		\caption{\ALGO}
		\label{main_alg_short}
	\end{small}
\end{algorithm}
\end{minipage}

\addtocounter{footnote}{-1}
\let\Hy@footnote@currentHref\saved@Href@A 
\footnotetext{While \citet{gregor2016variational, eysenbach2018diversity} employ rewards in the log domain, we find that mapping rewards into $[0,1]$ works better in practice, as observed in \citet{warde2018unsupervised, baumli2020relative}.}
\stepcounter{footnote}
\let\Hy@footnote@currentHref\saved@Href@B
\footnotetext{Ideally, we would set $\eta = \eta^{\dagger}$ from Lem.\,\ref{lemma_alternative_formulation}, however $\eta^{\dagger}$ is non-trivial to compute. A strategy may be to solve $(\mathcal{P}_{\eta'})$ for different values of $\eta'$ and select the one that maximizes the MI lower bound $\mathbb{E}\left[ \log q_{\phi}(z \vert s_{\text{diff}}) - \log \rho(z) \right]$. In our experiments we rather use the same predefined parameter of $\eta = 0.8$ which avoids computational overhead and performs well across all environments.}
\makeatother
\par
We then integrate the optimization of~\Eqref{new_opt_problem_approx} into an adaptive tree expansion strategy that incrementally composes skills (Sect.~\ref{subsect_TREE_skill_support_sampling_rule}). The tree is initialized with a root node corresponding to a policy only composed of the diffusing part around $s_0$. Then \ALGO iteratively proceeds through the following phases: \textbf{(Expansion)} While policies/nodes can be expanded according to different ordering rules (e.g., a FIFO strategy), we rank them in descending order by their discriminability (i.e., $\whqphiBz$), thus favoring the expansion of policies that reach regions of the state space that are not too saturated. Given a candidate leaf $z$ to expand from the tree, we introduce new policies by adding a set $\mathcal{C}(z)$ of $N=N^\text{start}$ nodes rooted at node $z$ (line \ref{line-dequeue}, see also steps \textit{(A)} and \textit{(D)} in Fig.\,\ref{fig:main_alg}). \textbf{(Policy learning)} The new policies are optimized in three steps (see App.\,\ref{app_algo} for details on the \mbox{\POL}\xspace subroutine): \textbf{i)} sample states from the diffusing parts of the new policies sampled uniformly from $\mathcal{C}(z)$ (state buffers of consolidated policies in $\mathcal{T}$ are kept in memory), \textbf{ii)} update the discriminator and compute the discriminability $\whqphiBzprime$ of new policies $z'\in\mathcal{C}(z)$, \textbf{iii)} update the skills to optimize the reward (Sect.~\ref{subsect_skill_optim}) computed using the discriminator (see step \textit{(B)} in Fig.\,\ref{fig:main_alg}). (\textbf{Node adaptation})  Once the policies are trained, \ALGO proceeds with optimizing $N$ in a greedy fashion. If all the policies in $\mathcal{C}(z)$ have an (estimated) discriminability larger than~$\eta$ (lines~\ref{line-if-okay}-\ref{line-addition}), a new policy is tentatively added to $\mathcal{C}(z)$, the policy counter $N$ is incremented, the \textit{policy learning} step is restarted, and the algorithm keeps adding policies until the constraint is not met anymore or a maximum number of policies is attained. Conversely, if every policy in $\mathcal{C}(z)$ does not meet the discriminability constraint (lines~\ref{line-if-not-okay}-\ref{line-removal}), the one with lowest discriminability is removed from $\mathcal{C}(z)$, the \textit{policy learning} step is restarted, and the algorithm keeps removing policies until all policies satisfy the constraint or no policy is left (see step \textit{(C)} in Fig.\,\ref{fig:main_alg}). The resulting $\mathcal{C}(z)$ is added to the set of \textit{consolidated} policies (line~\ref{line-enqueue}) and \ALGO iteratively proceeds by selecting another node to expand until no node can be expanded (i.e., the \textit{node adaptation} step terminates with $N=0$ for all nodes) or a maximum number of environment iterations is met.

\vspace{-0.09in}
\section{Related work} \label{sec_relatedwork}
\vspace{-0.03in}

URL methods can be broadly categorized depending on how the experience of the unsupervised phase is summarized to solve downstream tasks in a zero- or few-shot manner. This includes model-free \citep[e.g.,][]{pong2019skew}, model-based \citep[e.g.,][]{sekar2020planning} and representation learning \citep[e.g.,][]{yarats2021reinforcement} methods that build a representative replay buffer to 
learn accurate estimates or low-dimensional representations.
An alternative line of work focuses on discovering a set of skills in an unsupervised way. Our approach falls in this category, on which we now focus this section.

Skill discovery based on MI maximization was first proposed in \VIC \citep{gregor2016variational}, where the trajectories' final states are considered in the reverse form of \Eqref{eq_MI} and the policy parameters $\pi(z)$ and sampling distribution $\rho$ are simultaneously learned (with a fixed number of skills~$N_Z$). \DIAYN \citep{eysenbach2018diversity} fixes a uniform $\rho$ and weights skills with an action-entropy coefficient (i.e., it additionally minimizes the MI between actions and skills given the state) to push the skills away from each other.  \DADS \citep{sharma2019dynamics}  learns skills that are not only diverse but also predictable by learned dynamics models, using a generative model over observations and optimizing a forward form of MI $\mathcal{I}(s';z|s)$ between the next state~$s'$ and current skill~$z$ (with continuous latent) conditioned on the current state~$s$. \EDL \citep{campos2020explore} shows that existing skill discovery approaches can provide insufficient coverage and relies on a fixed distribution over states that is either provided by an oracle or learned. \SMM \citep{lee2019efficient} uses the MI formalism to learn a policy whose state marginal distribution matches a target state distribution (e.g., uniform). 
Other MI-based skill discovery methods include \citet{florensa2017stochastic, hansen2019fast, modhe2020irvic, baumli2020relative, xie2021skill, liu2021aps, strouse2021learning}, and extensions in non-episodic settings \citep{xu2020continual, lu2020reset}. 

While most skill discovery approaches consider a fixed number of policies, a curriculum with increasing $N_Z$ is studied in~\citet{achiam2018variational, aubret2020elsim}. We consider a similar discriminability criterion in the constraint in (\ref{final_opt_problem}) and show that it enables to maintain skills that can be readily composed along a tree structure, which can either increase or decrease the support of available skills depending on the region of the state space. Recently, \citet{zhang2021hierarchical} propose a hierarchical RL method that discovers abstract skills while jointly learning a higher-level policy to maximize extrinsic reward. Our approach builds on a similar promise of composing skills instead of resetting to~$s_0$ after each execution, yet we articulate the composition differently, by exploiting the direct-then-diffuse structure to ground skills to the state space instead of being abstract. Our connection of the latent $Z$ to the state space $S$ enables us to compose our skills so as to cover and navigate the state space in the absence of extrinsic reward.
\citet{hartikainen2020dynamical} perform unsupervised skill discovery by fitting a distance function; while their approach also includes a directed part and a diffusive part for exploration, it learns only a single directed policy and does not aim to cover the entire state space. 
Approaches such as \DISCERN \citep{warde2018unsupervised} and \SKEWFIT \citep{pong2019skew} learn a goal-conditioned policy in an unsupervised way with an MI objective. As argued by \citet{campos2020explore}, this can be interpreted as a skill discovery approach with latent $Z = S$, i.e., where each goal state can define a different skill. Conditioning on either goal states or abstract latent skills forms two extremes of the spectrum of unsupervised RL. As argued in Sect.\,\ref{subsect_skill_optim}, we target an intermediate approach of learning ``cluster-conditioned'' policies. Finally, an alternative approach to skill discovery builds on ``spectral'' properties of the dynamics of the MDP. This includes eigenoptions \citep{machado2017laplacian, machado2018eigenoption} and covering options \citep{jinnai2019discovering, jinnai2020exploration}, and \citet{bagaria2021skill} who build a discrete graph representation which learns and composes spectral~skills.

\newcommand{\clrandom}{gray!5}
\newcommand{\cldiayn}{yellow!5}
\newcommand{\clsmm}{green!5}
\newcommand{\cledl}{blue!5}
\newcommand{\clflatupside}{orange!5}
\newcommand{\clupside}{red!10}

\begin{figure}[t]
\vspace{-0.25in}
\begin{minipage}{0.62\linewidth}
\centering
\renewcommand*{\arraystretch}{1.05}
\begin{tabular}{ |c|c|c| } 
\hline
 & \hspace{-0.05in}\begin{minipage}{.3\textwidth}
   \centering \vspace{0.02in}
      \includegraphics[width=0.7\linewidth]{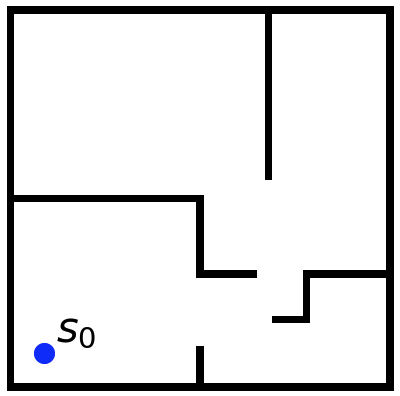}\\  Bottleneck Maze \vspace{0.05in}
    \end{minipage}\hspace{-0.05in} & \hspace{-0.05in}\begin{minipage}{.3\textwidth}
      \centering    \vspace{0.02in}
      \includegraphics[width=0.7\linewidth]{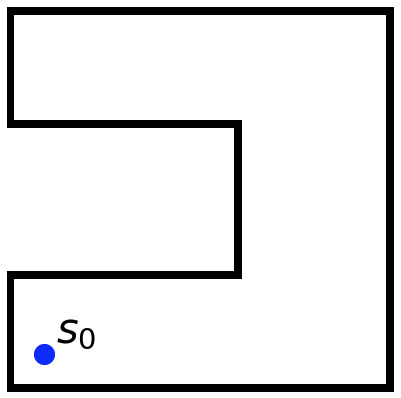} \\ U-Maze \vspace{0.05in}
    \end{minipage}\hspace{-0.05in} \\
 \hhline{|=|=|=|}
\rowcolor{\clrandom} \RANDOM & $29.17$ \quad {\scriptsize $(\pm 0.57)$} & $23.33$ \quad {\scriptsize $(\pm 0.57)$} \\
\hhline{|=|=|=|}
\rowcolor{\cldiayn} \DIAYNten & $17.67$ \quad {\scriptsize $(\pm 0.57)$} & $14.67$ \quad {\scriptsize $(\pm 0.42)$} \\
\hline
\rowcolor{\cldiayn} \DIAYNtwenty & $23.00$ \quad {\scriptsize $(\pm 1.09)$} & $16.67$ \quad {\scriptsize $(\pm 1.10)$}  \\
\hline
\rowcolor{\cldiayn} \DIAYNfifty & $30.00$ \quad {\scriptsize $(\pm 0.72)$} & $25.33$ \quad {\scriptsize $(\pm 1.03)$} \\
\hline
\rowcolor{\cldiayn} \DIAYNcurr & $18.00$ \quad {\scriptsize $(\pm 0.82)$} & $15.67$ \quad {\scriptsize $(\pm 0.87)$} \\
\hline
\rowcolor{\cldiayn} \DIAYNhier & $38.33$ \quad {\scriptsize $(\pm 0.68)$} & $49.67$ \quad {\scriptsize $(\pm 0.57)$}  \\
\hhline{|=|=|=|}
\rowcolor{\cledl} \EDLten & $27.00$ \quad {\scriptsize $(\pm 1.41)$} & $32.00$ \quad {\scriptsize $(\pm 1.19)$} \\
\hline
\rowcolor{\cledl} \EDLtwenty & $31.00$ \quad {\scriptsize $(\pm 0.47)$} & $46.00$ \quad {\scriptsize $(\pm 0.82)$} \\
\hline
\rowcolor{\cledl} \EDLfifty & $33.33$ \quad {\scriptsize $(\pm 0.42)$} & $52.33$ \quad {\scriptsize $(\pm 1.23)$} \\
\hhline{|=|=|=|}
\rowcolor{\clsmm} \SMMten & $19.00$ \quad {\scriptsize $(\pm 0.47)$} & $14.00$ \quad {\scriptsize $(\pm 0.54)$} \\
\hline
\rowcolor{\clsmm} \SMMtwenty & $23.67$ \quad {\scriptsize $(\pm 1.29)$} & $14.00$ \quad {\scriptsize $(\pm 0.27)$} \\
\hline
\rowcolor{\clsmm} \SMMfifty & $28.00$ \quad {\scriptsize $(\pm 0.82)$} & $25.00$ \quad {\scriptsize $(\pm 1.52)$} \\
\hhline{|=|=|=|}
\rowcolor{\clflatupside} Flat \ALGOten & $40.67$ \quad {\scriptsize $(\pm 1.50)$} & $43.33$ \quad {\scriptsize $(\pm 2.57)$}  \\
\hline
\rowcolor{\clflatupside} Flat \ALGOtwenty & $47.67$ \quad {\scriptsize $(\pm 0.31)$} & $55.67$ \quad {\scriptsize $(\pm 1.03)$} \\
\hline
\rowcolor{\clflatupside} Flat \ALGOfifty & $51.33$ \quad {\scriptsize $(\pm 1.64)$} & $57.33$ \quad {\scriptsize $(\pm 0.31)$} \\
\hhline{|=|=|=|}
\rowcolor{\clupside} \textbf{\ALGO} & $\mathbf{85.67}$ ~ {\scriptsize $(\pm 1.93)$} & $\mathbf{71.33}$ ~ {\scriptsize $(\pm 0.42)$} \\ 
  
\hline
\end{tabular}
\captionof{table}{Coverage on Bottleneck Maze and U-Maze: \ALGO covers significantly more regions of the discretized state space than the other methods. The values represent the number of buckets that are reached, where the $50 \times 50$ space is discretized into $10$ buckets per axis. To compare the global coverage of methods (and to be fair w.r.t.\,the amount of injected noise that may vary across methods), we roll-out for each model its associated deterministic policies.}
\label{table_cov_mazes}
\end{minipage}%
\hfill%
\begin{minipage}{0.35\linewidth}
\vspace{-0.2in}
\begin{figure}[H]
\flushright
\begin{subfigure}[t]{1.\linewidth}
\flushright
\begin{turn}{90}\hspace{0.42in}{\large \ALGO}\end{turn} \hspace{0.05in}
\includegraphics[width=0.72\linewidth,height=0.72\linewidth]{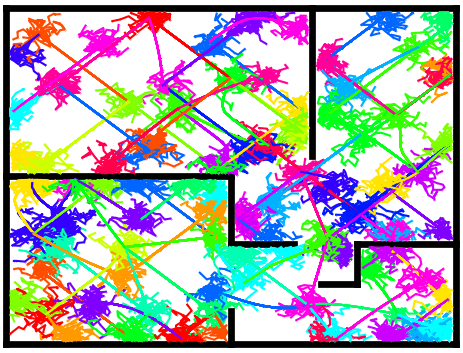}
\label{fig:visu-bn-upside}
\vspace{-0.08in}
\end{subfigure}
\begin{subfigure}[t]{1.\linewidth}
\flushright
\begin{turn}{90}\hspace{0.38in}{\large \DIAYNfifty}\end{turn} \hspace{0.05in}
\includegraphics[width=0.72\linewidth,height=0.72\linewidth]{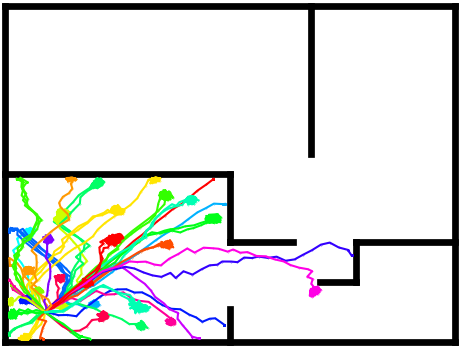}
\label{fig:visu-bn-diayn}
\vspace{-0.08in}
\end{subfigure}
\begin{subfigure}[t]{1.\linewidth}
\flushright
\begin{turn}{90}\hspace{0.42in}{\large \EDLfifty}\end{turn} \hspace{0.05in}
\includegraphics[width=0.72\linewidth,height=0.72\linewidth]{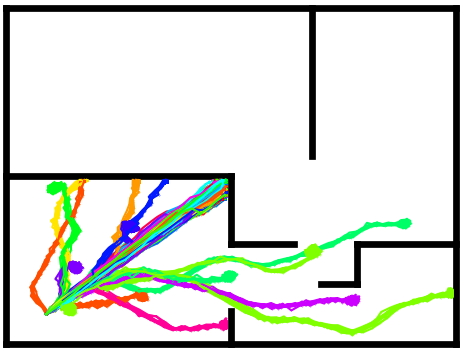}
\label{fig:visu-bn-edl}
\vspace{-0.06in}
\end{subfigure}
\vspace{-0.2in}
\caption{Policies learned on the Bottleneck Maze (see Fig.\,\ref{fig:rest-visu-bnmaze} in App.\,\ref{app_exp_details} for the other methods): contrary to the baselines, \ALGO successfully escapes the bottleneck region.}
\label{fig:visu-bn}
\end{figure}
\end{minipage}
\vspace{-0.1in}
\end{figure}

\section{Experiments} \label{sec_experiments}

Our experiments investigate the following questions: \textbf{i)}~Can \ALGO incrementally cover an unknown environment while preserving the directedness of its skills? \textbf{ii)}~Following the unsupervised phase, how can \ALGO be leveraged to solve sparse-reward goal-reaching downstream tasks? \textbf{iii)}~What is the impact of the different components of \ALGO on its performance?

We report results on navigation problems in continuous 2D mazes\footnote{The agent observes its current position and its actions (in $[-1, +1]$) control its shift in $x$ and $y$ coordinates. We consider two topologies of mazes illustrated in Fig.\,\ref{table_cov_mazes} with size $50 \times 50$ such that exploration is non-trivial. The Bottleneck maze is a harder version of the one in \citet[][Fig.\,1]{campos2020explore} whose size is only $10 \times 10$.} and on continuous control problems \citep{1606.01540,6386109}: Ant, Half-Cheetah and Walker2d. We evaluate performance with the following tasks: \textbf{1)} ``coverage'' which evaluates the extent to which the state space has been covered during the unsupervised phase, and \textbf{2)} ``unknown goal-reaching'' whose objective is to find and reliably reach an unknown goal location through fine-tuning of the policy. We perform our experiments based on the SaLinA framework \citep{denoyer2021salina}.

We compare \ALGO to different baselines. First we consider \DIAYN-$N_Z$ \citep{eysenbach2018diversity}, where $N_Z$ denotes a fixed number of skills. We introduce two new baselines derived from \DIAYN: a)~\DIAYNcurr is a curriculum variant where the number of skills is automatically tuned following the same procedure as in \ALGO, similar to \cite{achiam2018variational}, to ensure sufficient discriminability, and b)~\DIAYNhier is a hierarchical extension of \DIAYN where the skills are composed in a tree as in \ALGO but without the diffusing part. We also compare to \SMM \citep{lee2019efficient}, which is similar to \DIAYN but includes an exploration bonus encouraging the policies to visit rarely encountered states. In addition, we consider \EDL \citep{campos2020explore} with the  assumption of the available state distribution oracle (since replacing it by \SMM does not lead to satisfying results in presence of bottleneck states as shown in \citealp{campos2020explore}). Finally, we consider the \RANDOM policy, which samples actions uniformly in the action space. We use \TDthree as the policy optimizer \citep{fujimoto2018addressing} though we also tried \SAC \citep{haarnoja2018soft} which showed equivalent results than \TDthree with harder tuning. Similar to e.g., \citet{eysenbach2018diversity,bagaria2019option}, we restrict the observation space of the discriminator to the cartesian coordinates $(x, y)$ for Ant and $x$ for Half-Cheetah and Walker2d.  All algorithms were ran on $T_\text{max}=1e7$ unsupervised environment interactions in episodes of size $H_\text{max}=200$ (resp.\,$250$) for mazes (resp.\,for control). For baselines, models are selected according to the cumulated intrinsic reward (as done in e.g., \citealp{strouse2021learning}), while \ALGO, \DIAYNhier and \DIAYNcurr are selected according to the highest number of $\eta$-discriminable policies. On the downstream tasks, we consider \ICM \citep{pathak2017curiosity} as an additional baseline.
See App.\,\ref{app_exp_details} for the full experimental details. 

\begin{figure}[t!]
\vspace{-0.25in}
\centering
\subfloat[Ant]{\vspace{-0.02in}\includegraphics[width=0.3\linewidth]{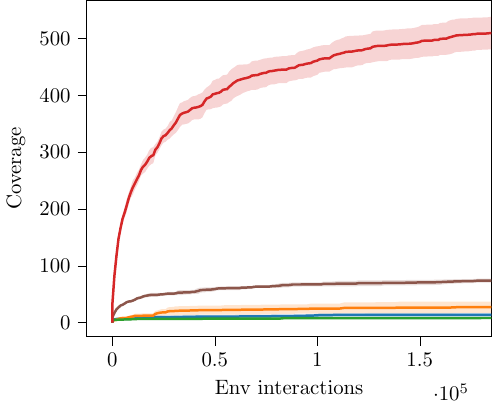}\vspace{-0.05in}} \quad 
\subfloat[Half-Cheetah]{\vspace{-0.02in}\includegraphics[width=0.3\linewidth]{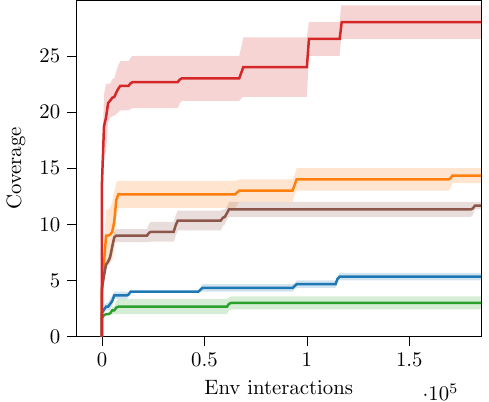}\vspace{-0.05in}} \quad
\subfloat[Walker2d]{\vspace{-0.02in}\includegraphics[width=0.3\linewidth]{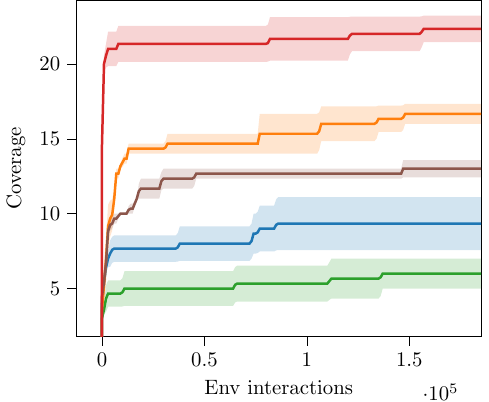}\vspace{-0.05in}} 

\begin{minipage}{0.78\linewidth}
\vspace{0.04in}
\caption{Coverage on control environments: \ALGO covers the state space significantly more than \DIAYN and \RANDOM. The curve represents the number of buckets reached by the policies extracted from the unsupervised phase of \ALGO and \DIAYN as a function of the number of environment interactions. \DIAYN and \ALGO have the same amount of injected noise. Each axis is discretized into $50$ buckets.}
\label{fig:cov_mujoco}
\end{minipage}
\hfill
\begin{minipage}{0.18\linewidth}
\vspace{-0.1in}
\includegraphics[width=0.8\linewidth]{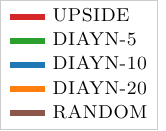}
\end{minipage}
\vspace{-0.2in}
\end{figure}

\begin{figure}[t!]
\vspace{-0.1in}
\centering
\subfloat[\ALGO policies on Ant]{\vspace{-0.04in}\includegraphics[width=1.52in]{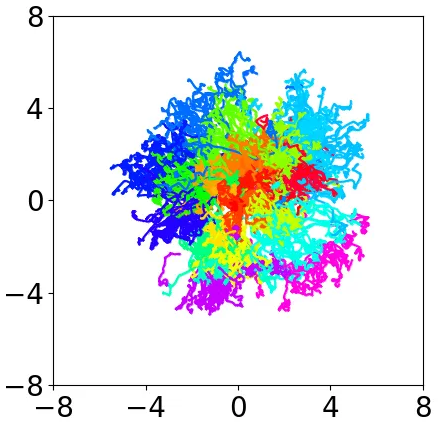}\vspace{-0.0in}\label{fig-ant-upside}} ~
\subfloat[\DIAYN policies on Ant]{\vspace{-0.04in}\includegraphics[width=1.52in]{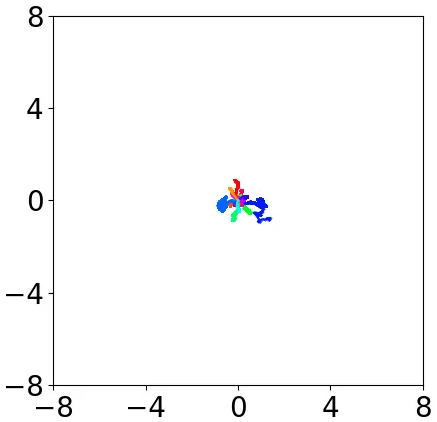}\vspace{-0.0in}\label{fig-ant-diayn}} ~
\subfloat[Fine-tuning on Ant]{ \includegraphics[width=1.75in]{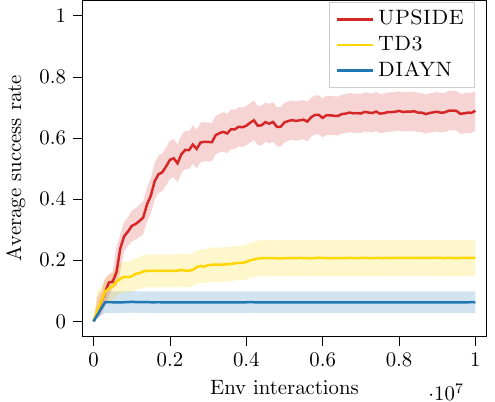}\vspace{-0.0in}\label{fig-ant-finetune}}
\vspace{-0.08in}
\caption{(a) \& (b) Unsupervised phase on Ant: visualization of the policies learned by \ALGO and \DIAYNtwenty. We display only the final skill and the diffusing part of the \ALGO policies. (c) Downstream tasks on Ant: we plot the average success rate over $48$ unknown goals (with sparse reward) that are sampled uniformly in the $[-8,8]^2$ square (using stochastic roll-outs) during the fine-tuning phase. \ALGO achieves higher success rate than \DIAYNtwenty and \TDthree.}
\label{fig:visu-ant}
\vspace{-0.18in}
\end{figure}

\textbf{Coverage.} We analyze the coverage achieved by the various methods following an unsupervised phase of at most $T_{\max}=1e7$ environment interactions. For \ALGO, we report coverage for the skill and diffusing part lengths $T=H=10$ in the continuous mazes (see App.\,\ref{app_ablation_TH} for an ablation on the values of $T, H$) and $T=H=50$ in control environments. Fig.\,\ref{fig:visu-bn} shows that \ALGO manages to cover the near-entirety of the state space of the bottleneck maze (including the top-left room) by creating a tree of directed skills, while the other methods struggle to escape from the bottleneck region. This translates quantitatively in the coverage measure of Table~\ref{table_cov_mazes} where \ALGO achieves the best results. As shown in Fig.\,\ref{fig:cov_mujoco} and \ref{fig:visu-ant}, \ALGO clearly outperforms \DIAYN and \RANDOM in state-coverage of control environments, for the same number of environment interactions. In the Ant domain, traces from \DIAYN (Fig.\,\ref{fig-ant-diayn}) and discriminator curves in App.\,\ref{ap:discriminability_analysis} demonstrate that even though \DIAYN successfully fits $20$ policies by learning to take a few steps then hover, it fails to explore the environment. In Half-Cheetah and Walker2d, while \DIAYN policies learn to fall on the agent's back, \ALGO learns to move forward/backward on its back through skill composition.

\begin{figure}[t]
\vspace{-0.2in}
\centering
\subfloat[\ALGO: before and after fine-tuning]{\includegraphics[scale=0.26]{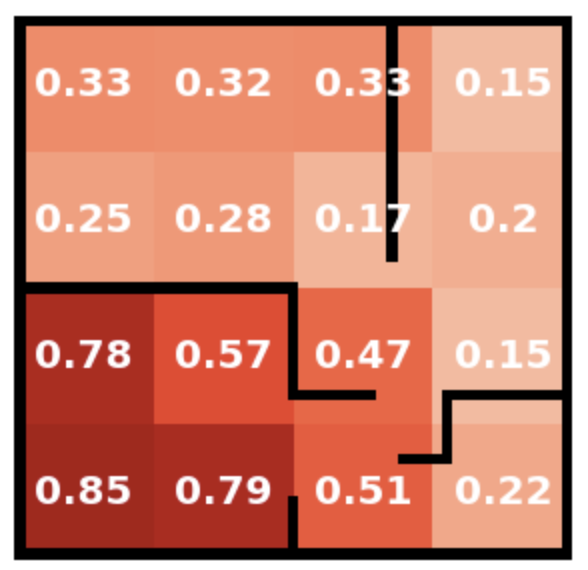}
\includegraphics[scale=0.26]{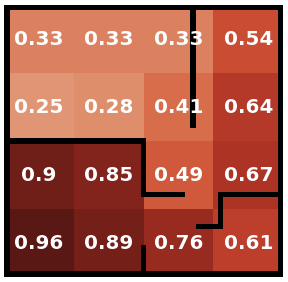}\vspace{-0.05in}}~
\subfloat[\DIAYN]{\includegraphics[scale=0.26]{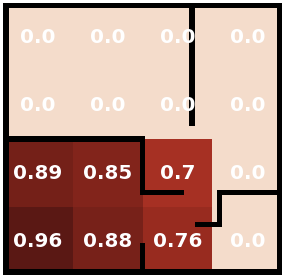}\vspace{-0.05in}}~
\subfloat[\EDL]{\includegraphics[scale=0.26]{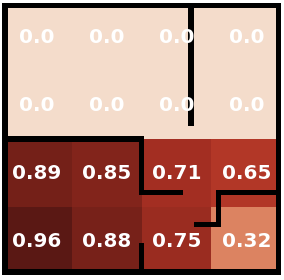}\vspace{-0.05in}}~
\subfloat[\ICM]{\includegraphics[scale=0.26]{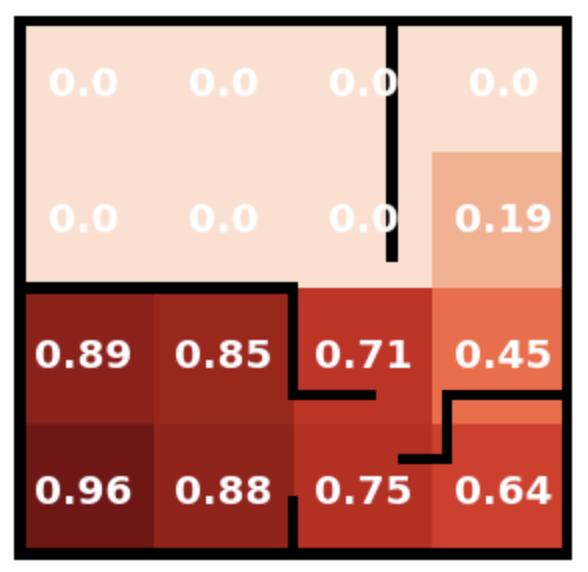}\vspace{-0.05in}} \\
\vspace{-0.06in}
\caption{Downstream task performance on Bottleneck Maze: \ALGO achieves higher discounted cumulative reward on various unknown goals    
(See Fig.\,\ref{fig:rest-bn-heatmaps} in App.\,\ref{app_exp_details} for \SMM and \TDthree performance).  From each of the $16$ discretized regions, we randomly sample $3$ \textit{unknown goals}. For every method and goal seed, we roll-out each policy (learned in the unsupervised phase) during $10$ episodes and select the one with largest cumulative reward to fine-tune (with sparse reward $r(s) = \mathds{I}[\Vert s-g \Vert_2\leq1]$). Formally, for a given goal $g$ the reported value is~$\gamma^{\tau} \mathds{I}[\tau \leq H_\text{max}]$ with $\tau := \inf\{t \geq 1: \Vert s_t - g \Vert_2\leq1 \}$, $\gamma = 0.99$ and horizon $H_\text{max}=200$. 
}\label{fig:heat_map_bn}
\vspace{-0.2in}
\end{figure}

\textbf{Unknown goal-reaching tasks.} We investigate how the tree of policies learned by \ALGO in the unsupervised phase can be used to tackle goal-reaching downstream tasks. All unsupervised methods follow the same protocol: given an unknown\footnote{Notice that if the goal was known, the learned discriminator could be directly used to identify the most promising skill to fine-tune.} goal $g$, i) we sample rollouts over
\begin{wrapfigure}{r}{0.28\textwidth}
\flushright
 \includegraphics[width=0.98\linewidth]{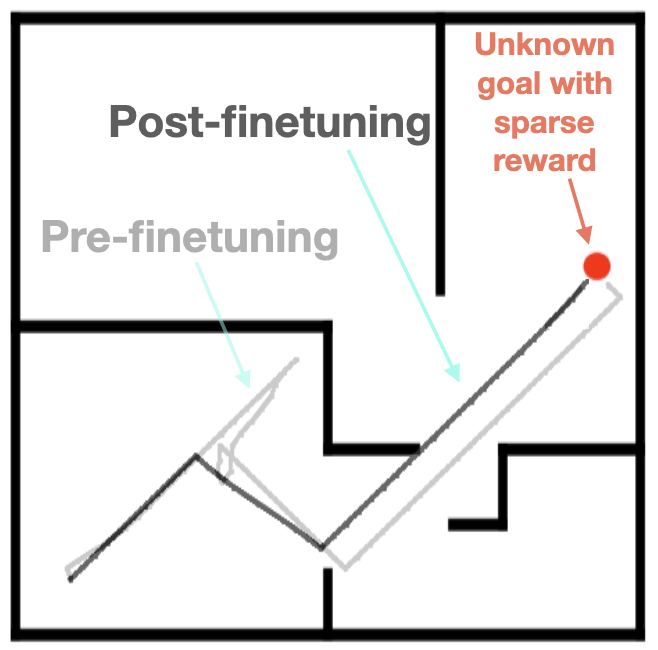}
  \vspace{-0.15in}
  \captionof{figure}{For an unknown goal location, \ALGO identifies a promising policy in its tree and fine-tunes it.}
   \vspace{-0.08in}
  \label{figure_finetuning_onegoal}
  \end{wrapfigure}
the different learned policies, ii)~then we select the best policy based on the maximum discounted cumulative reward collected, and iii)~we fine-tune this policy (i.e., its sequence of directed skills and its final diffusing part) 
to maximize the sparse reward $r(s) = \mathds{I}[\Vert s-g \Vert_2\leq1]$. Fig.\,\ref{fig:heat_map_bn} reports the discounted cumulative reward on various goals after the fine-tuning phase. We see that \ALGO accumulates more reward than the other methods, in particular in regions far from $s_0$, where performing fine-tuning over the entire skill path is especially challenging. In Fig.\,\ref{figure_finetuning_onegoal} we see that \ALGO's fine-tuning can slightly deviate from the original tree structure to improve the goal-reaching behavior of its candidate policy. We also perform fine-tuning on the Ant domain under the same setting. In Fig.\,\ref{fig-ant-finetune}, we show that \ALGO clearly outperforms \DIAYNtwenty and \TDthree when we evaluate the average success rate of reaching $48$ goals sampled uniformly in $[-8,8]^2$. 
Note that \DIAYN particularly fails as its policies learned during the unsupervised phase all stay close to the origin $s_0$.

\textbf{Ablative study of the \ALGO components.} The main components of \ALGO that differ from existing skill discovery approaches such as \DIAYN are: the decoupled policy structure, the constrained optimization problem and the skill chaining via the growing tree. We perform ablations to show that all components are simultaneously required for good performance. First, we compare \ALGO to flat \ALGO, i.e., \ALGO with the tree depth of~$1$ ($T=150, H=50$). Table~\ref{table_cov_mazes} reveals that the tree structuring is key to improve exploration and escape bottlenecks; it makes the agent learn on smaller and easier problems (i.e., short-horizon MDPs) and mitigates the optimization issues (e.g., non-stationary rewards). 
However, the diffusing part of flat \ALGO largely improves the coverage performance over the \DIAYN baseline, suggesting that the diffusing part is an interesting structural bias on the entropy regularization that pushes the policies away from each other. This is particularly useful on the Ant environment as shown in Fig.\,\ref{fig:visu-ant}. A challenging aspect is to make the skill composition work. As shown in Table~\ref{table_comp}, \DIAYNhier (a hierarchical version of \DIAYN) does not perform as well as \ALGO by a clear margin. In fact, \ALGO's direct-then-diffuse decoupling enables both policy re-usability for the chaining (via the directed skills) and local coverage (via the diffusing part). Moreover, as shown by the results of \DIAYNhier on the bottleneck maze, the constrained optimization problem (\ref{final_opt_problem}) combined with the diffusing part is crucial to prevent consolidating too many policies, thus allowing a sample efficient growth of the tree structure.

\vspace{-0.05in}
\section{Conclusion and Limitations} \label{sec_conclusion}
\vspace{-0.05in}

We introduced \ALGO, a novel algorithm for unsupervised skill discovery designed to trade off between coverage and directedness and develop a tree of skills that can be used to perform efficient exploration
and solve sparse-reward goal-reaching downstream tasks. Limitations of our approach that constitute natural venues for future investigation are: \textbf{1)} The diffusing part of each policy
could be explicitly trained to maximize local coverage around the skill's terminal state; \textbf{2)} \ALGO assumes a good state representation is provided as input to the discriminator, it would be interesting to pair \ALGO with effective representation learning techniques to tackle problems with high-dimensional input; \textbf{3)} As \ALGO relies on the ability to reset to establish a root node for its growing tree, it could be relevant to extend the approach in non-episodic environments.

\paragraph{Acknowledgements}
We thank both Evrard Garcelon and Jonas Gehring for helpful discussion.

\bibliography{bibliography.bib}

\begin{thebibliography}{40}
\providecommand{\natexlab}[1]{#1}
\providecommand{\url}[1]{\texttt{#1}}
\expandafter\ifx\csname urlstyle\endcsname\relax
  \providecommand{\doi}[1]{doi: #1}\else
  \providecommand{\doi}{doi: \begingroup \urlstyle{rm}\Url}\fi

\bibitem[Achiam et~al.(2018)Achiam, Edwards, Amodei, and
  Abbeel]{achiam2018variational}
Joshua Achiam, Harrison Edwards, Dario Amodei, and Pieter Abbeel.
\newblock Variational option discovery algorithms.
\newblock \emph{arXiv preprint arXiv:1807.10299}, 2018.

\bibitem[Aubret et~al.(2020)Aubret, Matignon, and Hassas]{aubret2020elsim}
Arthur Aubret, La{\"e}titia Matignon, and Salima Hassas.
\newblock Elsim: End-to-end learning of reusable skills through intrinsic
  motivation.
\newblock In \emph{European Conference on Machine Learning and Principles and
  Practice of Knowledge Discovery in Databases (ECML-PKDD)}, 2020.

\bibitem[Bagaria \& Konidaris(2020)Bagaria and Konidaris]{bagaria2019option}
Akhil Bagaria and George Konidaris.
\newblock Option discovery using deep skill chaining.
\newblock In \emph{International Conference on Learning Representations}, 2020.

\bibitem[Bagaria et~al.(2021)Bagaria, Senthil, and Konidaris]{bagaria2021skill}
Akhil Bagaria, Jason~K Senthil, and George Konidaris.
\newblock Skill discovery for exploration and planning using deep skill graphs.
\newblock In \emph{International Conference on Machine Learning}, pp.\
  521--531. PMLR, 2021.

\bibitem[Barber \& Agakov(2004)Barber and Agakov]{agakov2004algorithm}
David Barber and Felix Agakov.
\newblock The im algorithm: a variational approach to information maximization.
\newblock \emph{Advances in neural information processing systems}, 16\penalty0
  (320):\penalty0 201, 2004.

\bibitem[Baumli et~al.(2021)Baumli, Warde-Farley, Hansen, and
  Mnih]{baumli2020relative}
Kate Baumli, David Warde-Farley, Steven Hansen, and Volodymyr Mnih.
\newblock Relative variational intrinsic control.
\newblock In \emph{Proceedings of the AAAI Conference on Artificial
  Intelligence}, volume~35, pp.\  6732--6740, 2021.

\bibitem[Bellemare et~al.(2013)Bellemare, Naddaf, Veness, and
  Bowling]{bellemare2013arcade}
Marc~G Bellemare, Yavar Naddaf, Joel Veness, and Michael Bowling.
\newblock The arcade learning environment: An evaluation platform for general
  agents.
\newblock \emph{Journal of Artificial Intelligence Research}, 47:\penalty0
  253--279, 2013.

\bibitem[Brockman et~al.(2016)Brockman, Cheung, Pettersson, Schneider,
  Schulman, Tang, and Zaremba]{1606.01540}
Greg Brockman, Vicki Cheung, Ludwig Pettersson, Jonas Schneider, John Schulman,
  Jie Tang, and Wojciech Zaremba.
\newblock Openai gym, 2016.

\bibitem[Campos et~al.(2020)Campos, Trott, Xiong, Socher, Giro-i Nieto, and
  Torres]{campos2020explore}
V{\'\i}ctor Campos, Alexander Trott, Caiming Xiong, Richard Socher, Xavier
  Giro-i Nieto, and Jordi Torres.
\newblock Explore, discover and learn: Unsupervised discovery of state-covering
  skills.
\newblock In \emph{International Conference on Machine Learning}, 2020.

\bibitem[Denoyer et~al.(2021)Denoyer, de~la Fuente, Duong, Gaya, Kamienny, and
  Thompson]{denoyer2021salina}
Ludovic Denoyer, Alfredo de~la Fuente, Song Duong, Jean-Baptiste Gaya,
  Pierre-Alexandre Kamienny, and Daniel~H Thompson.
\newblock Salina: Sequential learning of agents.
\newblock \emph{arXiv preprint arXiv:2110.07910}, 2021.

\bibitem[Eysenbach et~al.(2019)Eysenbach, Gupta, Ibarz, and
  Levine]{eysenbach2018diversity}
Benjamin Eysenbach, Abhishek Gupta, Julian Ibarz, and Sergey Levine.
\newblock Diversity is all you need: Learning skills without a reward function.
\newblock In \emph{International Conference on Learning Representations}, 2019.

\bibitem[Florensa et~al.(2017)Florensa, Duan, and
  Abbeel]{florensa2017stochastic}
Carlos Florensa, Yan Duan, and Pieter Abbeel.
\newblock Stochastic neural networks for hierarchical reinforcement learning.
\newblock \emph{arXiv preprint arXiv:1704.03012}, 2017.

\bibitem[Fujimoto et~al.(2018)Fujimoto, Hoof, and
  Meger]{fujimoto2018addressing}
Scott Fujimoto, Herke Hoof, and David Meger.
\newblock Addressing function approximation error in actor-critic methods.
\newblock In \emph{International Conference on Machine Learning}, pp.\
  1587--1596. PMLR, 2018.

\bibitem[Gregor et~al.(2016)Gregor, Rezende, and
  Wierstra]{gregor2016variational}
Karol Gregor, Danilo~Jimenez Rezende, and Daan Wierstra.
\newblock Variational intrinsic control.
\newblock \emph{arXiv preprint arXiv:1611.07507}, 2016.

\bibitem[Haarnoja et~al.(2018)Haarnoja, Zhou, Abbeel, and
  Levine]{haarnoja2018soft}
Tuomas Haarnoja, Aurick Zhou, Pieter Abbeel, and Sergey Levine.
\newblock Soft actor-critic: Off-policy maximum entropy deep reinforcement
  learning with a stochastic actor.
\newblock In \emph{International Conference on Machine Learning}, pp.\
  1861--1870. PMLR, 2018.

\bibitem[Hansen et~al.(2019)Hansen, Dabney, Barreto, Warde-Farley, Van~de
  Wiele, and Mnih]{hansen2019fast}
Steven Hansen, Will Dabney, Andre Barreto, David Warde-Farley, Tom Van~de
  Wiele, and Volodymyr Mnih.
\newblock Fast task inference with variational intrinsic successor features.
\newblock In \emph{International Conference on Learning Representations}, 2019.

\bibitem[Hartikainen et~al.(2020)Hartikainen, Geng, Haarnoja, and
  Levine]{hartikainen2020dynamical}
Kristian Hartikainen, Xinyang Geng, Tuomas Haarnoja, and Sergey Levine.
\newblock Dynamical distance learning for semi-supervised and unsupervised
  skill discovery.
\newblock In \emph{International Conference on Learning Representations}, 2020.

\bibitem[Houthooft et~al.(2016)Houthooft, Chen, Duan, Schulman, De~Turck, and
  Abbeel]{houthooft2016vime}
Rein Houthooft, Xi~Chen, Yan Duan, John Schulman, Filip De~Turck, and Pieter
  Abbeel.
\newblock Vime: variational information maximizing exploration.
\newblock In \emph{Proceedings of the 30th International Conference on Neural
  Information Processing Systems}, pp.\  1117--1125, 2016.

\bibitem[Jinnai et~al.(2019)Jinnai, Park, Abel, and
  Konidaris]{jinnai2019discovering}
Yuu Jinnai, Jee~Won Park, David Abel, and George Konidaris.
\newblock Discovering options for exploration by minimizing cover time.
\newblock In \emph{International Conference on Machine Learning}, pp.\
  3130--3139. PMLR, 2019.

\bibitem[Jinnai et~al.(2020)Jinnai, Park, Machado, and
  Konidaris]{jinnai2020exploration}
Yuu Jinnai, Jee~Won Park, Marlos~C Machado, and George Konidaris.
\newblock Exploration in reinforcement learning with deep covering options.
\newblock In \emph{International Conference on Learning Representations}, 2020.

\bibitem[Lee et~al.(2019)Lee, Eysenbach, Parisotto, Xing, Levine, and
  Salakhutdinov]{lee2019efficient}
Lisa Lee, Benjamin Eysenbach, Emilio Parisotto, Eric Xing, Sergey Levine, and
  Ruslan Salakhutdinov.
\newblock Efficient exploration via state marginal matching.
\newblock \emph{arXiv preprint arXiv:1906.05274}, 2019.

\bibitem[Liu \& Abbeel(2021)Liu and Abbeel]{liu2021aps}
Hao Liu and Pieter Abbeel.
\newblock Aps: Active pretraining with successor features.
\newblock In \emph{International Conference on Machine Learning}, pp.\
  6736--6747. PMLR, 2021.

\bibitem[Lu et~al.(2020)Lu, Grover, Abbeel, and Mordatch]{lu2020reset}
Kevin Lu, Aditya Grover, Pieter Abbeel, and Igor Mordatch.
\newblock Reset-free lifelong learning with skill-space planning.
\newblock \emph{arXiv preprint arXiv:2012.03548}, 2020.

\bibitem[Machado et~al.(2017)Machado, Bellemare, and
  Bowling]{machado2017laplacian}
Marlos~C Machado, Marc~G Bellemare, and Michael Bowling.
\newblock A laplacian framework for option discovery in reinforcement learning.
\newblock In \emph{International Conference on Machine Learning}, pp.\
  2295--2304. PMLR, 2017.

\bibitem[Machado et~al.(2018)Machado, Rosenbaum, Guo, Liu, Tesauro, and
  Campbell]{machado2018eigenoption}
Marlos~C Machado, Clemens Rosenbaum, Xiaoxiao Guo, Miao Liu, Gerald Tesauro,
  and Murray Campbell.
\newblock Eigenoption discovery through the deep successor representation.
\newblock In \emph{International Conference on Learning Representations}, 2018.

\bibitem[Mnih et~al.(2015)Mnih, Kavukcuoglu, Silver, Rusu, Veness, Bellemare,
  Graves, Riedmiller, Fidjeland, Ostrovski, et~al.]{mnih2015human}
Volodymyr Mnih, Koray Kavukcuoglu, David Silver, Andrei~A Rusu, Joel Veness,
  Marc~G Bellemare, Alex Graves, Martin Riedmiller, Andreas~K Fidjeland, Georg
  Ostrovski, et~al.
\newblock Human-level control through deep reinforcement learning.
\newblock \emph{nature}, 518\penalty0 (7540):\penalty0 529--533, 2015.

\bibitem[Modhe et~al.(2020)Modhe, Chattopadhyay, Sharma, Das, Parikh, Batra,
  and Vedantam]{modhe2020irvic}
Nirbhay Modhe, Prithvijit Chattopadhyay, Mohit Sharma, Abhishek Das, Devi
  Parikh, Dhruv Batra, and Ramakrishna Vedantam.
\newblock Ir-vic: Unsupervised discovery of sub-goals for transfer in rl.
\newblock In \emph{Proceedings of the Twenty-Ninth International Joint
  Conference on Artificial Intelligence, {IJCAI-20}}, 2020.

\bibitem[Mohamed \& Rezende(2015)Mohamed and Rezende]{mohamed2015variational}
Shakir Mohamed and Danilo~Jimenez Rezende.
\newblock Variational information maximisation for intrinsically motivated
  reinforcement learning.
\newblock In \emph{Advances in neural information processing systems}, pp.\
  2125--2133, 2015.

\bibitem[Mutti et~al.(2021)Mutti, Pratissoli, and Restelli]{mutti2020policy}
Mirco Mutti, Lorenzo Pratissoli, and Marcello Restelli.
\newblock Task-agnostic exploration via policy gradient of a non-parametric
  state entropy estimate.
\newblock In \emph{Proceedings of the AAAI Conference on Artificial
  Intelligence}, volume~35, pp.\  9028--9036, 2021.

\bibitem[Pathak et~al.(2017)Pathak, Agrawal, Efros, and
  Darrell]{pathak2017curiosity}
Deepak Pathak, Pulkit Agrawal, Alexei~A Efros, and Trevor Darrell.
\newblock Curiosity-driven exploration by self-supervised prediction.
\newblock In \emph{Proceedings of the IEEE Conference on Computer Vision and
  Pattern Recognition Workshops}, pp.\  16--17, 2017.

\bibitem[Pong et~al.(2020)Pong, Dalal, Lin, Nair, Bahl, and
  Levine]{pong2019skew}
Vitchyr~H Pong, Murtaza Dalal, Steven Lin, Ashvin Nair, Shikhar Bahl, and
  Sergey Levine.
\newblock Skew-fit: State-covering self-supervised reinforcement learning.
\newblock In \emph{International Conference on Machine Learning}, 2020.

\bibitem[Sekar et~al.(2020)Sekar, Rybkin, Daniilidis, Abbeel, Hafner, and
  Pathak]{sekar2020planning}
Ramanan Sekar, Oleh Rybkin, Kostas Daniilidis, Pieter Abbeel, Danijar Hafner,
  and Deepak Pathak.
\newblock Planning to explore via self-supervised world models.
\newblock In \emph{International Conference on Machine Learning}, pp.\
  8583--8592. PMLR, 2020.

\bibitem[Sharma et~al.(2020)Sharma, Gu, Levine, Kumar, and
  Hausman]{sharma2019dynamics}
Archit Sharma, Shixiang Gu, Sergey Levine, Vikash Kumar, and Karol Hausman.
\newblock Dynamics-aware unsupervised discovery of skills.
\newblock In \emph{International Conference on Learning Representations}, 2020.

\bibitem[Strouse et~al.(2021)Strouse, Baumli, Warde-Farley, Mnih, and
  Hansen]{strouse2021learning}
DJ~Strouse, Kate Baumli, David Warde-Farley, Vlad Mnih, and Steven Hansen.
\newblock Learning more skills through optimistic exploration.
\newblock \emph{arXiv preprint arXiv:2107.14226}, 2021.

\bibitem[Todorov et~al.(2012)Todorov, Erez, and Tassa]{6386109}
Emanuel Todorov, Tom Erez, and Yuval Tassa.
\newblock Mujoco: A physics engine for model-based control.
\newblock In \emph{2012 IEEE/RSJ International Conference on Intelligent Robots
  and Systems}, pp.\  5026--5033. IEEE, 2012.

\bibitem[Warde-Farley et~al.(2019)Warde-Farley, Van~de Wiele, Kulkarni,
  Ionescu, Hansen, and Mnih]{warde2018unsupervised}
David Warde-Farley, Tom Van~de Wiele, Tejas Kulkarni, Catalin Ionescu, Steven
  Hansen, and Volodymyr Mnih.
\newblock Unsupervised control through non-parametric discriminative rewards.
\newblock In \emph{International Conference on Learning Representations}, 2019.

\bibitem[Xie et~al.(2021)Xie, Bharadhwaj, Hafner, Garg, and
  Shkurti]{xie2021skill}
Kevin Xie, Homanga Bharadhwaj, Danijar Hafner, Animesh Garg, and Florian
  Shkurti.
\newblock Skill transfer via partially amortized hierarchical planning.
\newblock In \emph{International Conference on Learning Representations}, 2021.

\bibitem[Xu et~al.(2020)Xu, Verma, Finn, and Levine]{xu2020continual}
Kelvin Xu, Siddharth Verma, Chelsea Finn, and Sergey Levine.
\newblock Continual learning of control primitives: Skill discovery via
  reset-games.
\newblock \emph{Advances in Neural Information Processing Systems}, 33, 2020.

\bibitem[Yarats et~al.(2021)Yarats, Fergus, Lazaric, and
  Pinto]{yarats2021reinforcement}
Denis Yarats, Rob Fergus, Alessandro Lazaric, and Lerrel Pinto.
\newblock Reinforcement learning with prototypical representations.
\newblock In \emph{Proceedings of the 38th International Conference on Machine
  Learning}, pp.\  11920--11931. PMLR, 2021.

\bibitem[Zhang et~al.(2021)Zhang, Yu, and Xu]{zhang2021hierarchical}
Jesse Zhang, Haonan Yu, and Wei Xu.
\newblock Hierarchical reinforcement learning by discovering intrinsic options.
\newblock In \emph{International Conference on Learning Representations}, 2021.

\end{thebibliography}
\bibliographystyle{iclr2022_conference}

\newpage

\appendix

\part{\LARGE{Appendix}}

\parttoc

\section{Theoretical Details on Section \ref{sec_UPSIDE}} \label{appendix_derivations}

\subsection{Proofs of Lemmas \ref{lemma_alternative_formulation} and \ref{lemma_g_decreasing}} \label{proofs}

\newtheorem*{T1}{Restatement of Lemma \ref{lemma_alternative_formulation}}
\begin{T1}
  \lemmaalternativeformulation
\end{T1}     

\begin{proof}
We assume that the number of available skills is upper bounded, i.e., $1 \leq N_Z \leq N_{\max}$. We begin by lower bounding the negative conditional entropy by using the well known lower bound of \citet{agakov2004algorithm} on the mutual
information
\begin{align*}
- \mathcal{H}(Z \vert \Sdiff)  &= \sum_{z \in Z} \rho(z) \mathbb{E}_{\sdiff}\left[ \log p(z \vert \sdiff) \right] \\ 
&\geq  \sum_{z \in Z} \rho(z) \mathbb{E}_{\sdiff}\left[ \log q_{\phi}(z \vert \sdiff) \right]. 
\end{align*}

We now use that any weighted average is lower bounded by its minimum component, which yields
\begin{align*}
- \mathcal{H}(Z \vert \Sdiff)  \geq  \min_{z \in Z} \,\mathbb{E}_{\sdiff}\left[ \log q_{\phi}(z \vert \sdiff) \right].
\end{align*}
Thus the following objective is a lower bound on the original objective of maximizing $\mathcal{I}(S_{\text{diff}} ; Z)$
\begin{align}
    &\max_{N_Z=N, \rho,\pi,\phi}  \Big\{  \mathcal{H}(Z) + \min_{z \in [N]} \mathbb{E}_{s_\text{diff}}\left[ \log q_{\phi}(z \vert \sdiff) \right]  \Big\}.
\label{reverse_MI_2}
\end{align}
Interestingly, the second term in \Eqref{reverse_MI_2} no longer depends on the skill distribution $\rho$, while the first entropy term $\mathcal{H}(Z)$ is maximized by setting $\rho$ to the uniform distribution over $N$ skills (i.e., $\max_{\rho} \mathcal{H}(Z) = \log(N)$). This enables to simplify the optimization which now only depends on $N$. Thus \Eqref{reverse_MI_2} is equivalent to 
\begin{align}
    &\max_{N_Z=N}  \Big\{ \log(N) + \max_{\pi,\phi} \min_{z \in [N]} \mathbb{E}_{s_\text{diff}}\left[ \log q_{\phi}(z \vert \sdiff) \right]  \Big\}.
\label{reverse_MI_3}
\end{align}
We define the functions
\begin{align*}
    f(N) := \log(N), \quad \quad  g(N) := \max_{\pi,\phi} \min_{z \in [N]} \mathbb{E}_{s_\text{diff}}\left[ \log q_{\phi}(z \vert \sdiff) \right].
\end{align*}
Let $N^{\dagger} \in \argmax_N f(N) + g(N)$ and $\eta^{\dagger} := \exp g(N^{\dagger}) \in (0, 1)$. We now show that any solution of $(\mathcal{P}_{\eta^{\dagger}})$ is a solution of \Eqref{reverse_MI_3}. Indeed, denote by $N^{\star}$ the value that optimizes $(\mathcal{P}_{\eta^{\dagger}})$. First, by validity of the constraint, it holds that $g(N^{\star}) \geq \log \eta^{\dagger} = g(N^{\dagger})$. Second, since $N^{\dagger}$ meets the constraint and $N^{\star}$ is the maximal number of skills that satisfies the constraint of $(\mathcal{P}_{\eta^{\dagger}})$, by optimality we have that $N^{\star} \geq N^{\dagger}$ and therefore $f(N^{\star}) \geq f(N^{\dagger})$ since $f$ is non-decreasing. We thus have 
\begin{align*}
    \begin{cases}
      g(N^{\star}) \geq g(N^{\dagger}) \\
      f(N^{\star}) \geq f(N^{\dagger})
    \end{cases}   \implies f(N^{\star}) + g(N^{\star}) \geq f(N^{\dagger}) + g(N^{\dagger}).
\end{align*}
Putting everything together, an optimal solution for $(\mathcal{P}_{\eta^{\dagger}})$ is an optimal solution for \Eqref{reverse_MI_3}, which is equivalent to \Eqref{reverse_MI_2}, which is a lower bound of the MI objective, thus concluding the proof.
\end{proof}

\newtheorem*{T2}{Restatement of Lemma \ref{lemma_g_decreasing}}
\begin{T2}
  \lemmagdecreasing
\end{T2}     

\begin{proof}
    We have that $g(N) := \max_{\pi,q} \min_{z \in [N]} \mathbb{E}_{s \sim \pi(z)} [ \log(q(z\vert s)]$, where throughout the proof we write $s$ instead of $s_{\textrm{{\tiny diff}}}$ for ease of notation. Here the optimization variables are $\pi \in (\Pi)^N$ (i.e., a set of $N$ policies) and $q: \mathcal{S} \rightarrow \Delta(N)$, i.e., a classifier of states to $N$ possible classes, where $\Delta(N)$ denotes the $N$-simplex. For $z \in [N]$, let
\begin{align*}
    h_N(\pi,q,z) := \mathbb{E}_{s \sim \pi(z)} [ \log(q(z\vert s)], \quad \quad
    f_N(\pi,q) := \min_{z \in [N]} h_N(\pi,q,z).
\end{align*}
Let $(\pi', q') \in \arg\max_{\pi,q} f_{N+1}(\pi,q)$. We define $\widetilde{\pi} := ( \pi'(1), \ldots, \pi'(N) ) \in (\Pi)^N$ and $\widetilde{q}: \mathcal{S} \rightarrow \Delta(N)$ such that $\widetilde{q}(i \vert s) := q'(i \vert s)$ for any $i \in [N-1]$ and $\widetilde{q}(N \vert s) := q'(N \vert s) +  q'(N+1 \vert s)$. Intuitively, we are ``discarding'' policy $N+1$ and ``merging'' class $N+1$ with class $N$. 

Then it holds that $$g(N+1) = f_{N+1}(\pi',q') = \min_{z \in [N+1]} h_{N+1}(\pi',q',z) \leq \min_{z \in [N]} h_{N+1}(\pi',q',z).$$ Now, by construction of $\widetilde{\pi},\widetilde{q}$, we have for any $i \in [N-1]$ that 
$h_{N+1}(\pi',q',i) = h_N(\widetilde{\pi},\widetilde{q},i)$. As for class $N$, since $\widetilde{\pi}(N) = \pi'(N)$, by definition of $\widetilde{q}(N \vert \cdot)$ and from monotonicity of the log function, it holds that $h_{N+1}(\pi',q',N) = \mathbb{E}_{s \sim \pi'(N)} [ \log(q'(N\vert s)]$ satisfies
$$h_{N+1}(\pi',q',N) \leq \mathbb{E}_{s \sim \tilde{\pi}(N)} [ \log(\widetilde{q}(N\vert s)] = h_N(\widetilde{\pi},\widetilde{q},N).$$ 
Hence, we get that $$\min_{z \in [N]} h_{N+1}(\pi',q',z) \leq \min_{z \in [N]} h_{N}(\widetilde{\pi},\widetilde{q},z) = f_N(\widetilde{\pi},\widetilde{q}) \leq g(N).$$ Putting everything together gives $g(N+1) \leq g(N)$, which yields the desired result.
\end{proof}

\vspace{0.2in}

\subsection{Simple Illustration of the Issue with Uniform-$\rho$ MI Maximization} \label{simple_scenario}

\begin{wrapfigure}[14]{r}{0.45\textwidth}
\centering
\vspace{-0.2in}
 \includegraphics[width=0.99\linewidth]{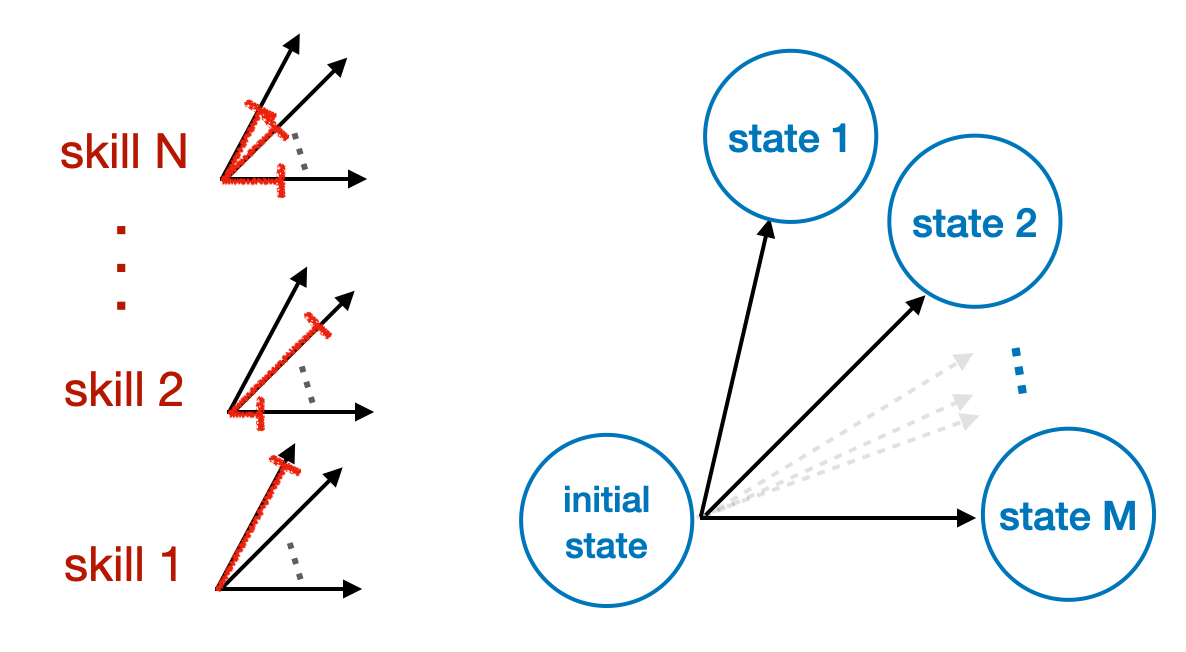}
	\caption{{\small The agent must assign (possibly stochastically) $N$ skills to $M$ states: \textit{under the prior of uniform skill distribution, can the MI be increased by varying the number of skills $N$?}}}
		\label{fig:ballsboxes}
\end{wrapfigure}

This section complements Sect.\,\ref{subsection_optimization}: we show a simple scenario where \textbf{1)} considering both a uniform~$\rho$ prior and a fixed skill number $N_Z$ provably leads to suboptimal MI maximization, and where \textbf{2)} the \ALGO strategy of considering a uniform $\rho$ restricted to the $\eta$-discriminable skills can provably increase the MI for small enough $\eta$.

Consider the simple scenario (illustrated on Fig.\,\ref{fig:ballsboxes}) where the agent has $N$ skills indexed by $n$ and must assign them to $M$ states indexed by $m$. We assume that the execution of each skill deterministically brings it to the assigned state, yet the agent may assign stochastically (i.e., more than one state per skill). (A non-RL way to interpret this is that we want to allocate $N$ balls into $M$ boxes.) Denote by $p_{n,m} \in [0, 1]$ the probability that skill $n$ is assigned to state $m$. It must hold that $\forall n \in [N], \sum_{m} p_{n,m} = 1$. Denote by $\gI$ the MI between the skill variable and the assigned state variable, and by $\ov{\gI}$ the MI under the prior that the skill sampling distribution $\rho$ is uniform, i.e., $\rho(n) = 1 / N$. It holds that
\begin{align*}
    \ov{\gI}(N,M)  = - \sum_n \frac{1}{N} \log \frac{1}{N} + \sum_{n,m} \frac{1}{N} p_{n,m} \log \frac{\frac{1}{N} p_{n,m}}{\sum_n \frac{1}{N} p_{n,m}} = \log N + \frac{1}{N} \sum_{n,m} p_{n,m} \log \frac{p_{n,m}}{\sum_n p_{n,m}}.
\end{align*}
Let $\ov{\gI}^{\star}(N,M) := \max_{\{p_{n,m}\}} \ov{\gI}(N,M)$ and $\{ p_{n,m}^{\star} \} \in \argmax_{\{p_{n,m}\}} \ov{\gI}(N,M)$. We also define the \textit{discriminability} of skill $n$ in state $m$ as 
\begin{align*}
    q_{n,m} := \frac{p_{n,m}}{\sum_{n} p_{n,m}},
\end{align*}
as well as the \textit{minimum discriminability} of the optimal assignment as 
\begin{align*}
    \eta := \min_{n} \max_m q^{\star}_{n,m}.
\end{align*}

\begin{lemma}
There exist values of $N$ and $M$ such that the uniform-$\rho$ MI can be improved by removing a skill (i.e., by decreasing $N$).
\end{lemma}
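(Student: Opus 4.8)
The plan is to exhibit an explicit pair $(N, M)$ together with the optimal uniform-$\rho$ assignment, and then check by direct comparison that $\ov{\gI}^{\star}(N-1, M) > \ov{\gI}^{\star}(N, M)$. The natural candidate is $M = 2$ states with $N = 3$ skills versus $N = 2$ skills: with two states and two skills we can have a perfectly deterministic and perfectly discriminable assignment (skill $1 \to$ state $1$, skill $2 \to$ state $2$), giving $\ov{\gI}^{\star}(2,2) = \log 2$; whereas with three skills and only two states, by pigeonhole at least two skills must share ``mass'' on the same state, so some discriminability $q_{n,m}$ is forced below $1$, and the $\log N = \log 3$ gain cannot fully compensate. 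So the first concrete step is to \emph{compute} $\ov{\gI}^{\star}(3,2)$.

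To compute $\ov{\gI}^{\star}(3,2)$ I would set up the optimization $\max \big\{ \log 3 + \frac{1}{3}\sum_{n,m} p_{n,m}\log \frac{p_{n,m}}{\sum_{n'} p_{n',m}} \big\}$ subject to each row summing to $1$. By symmetry between the two states and among the three skills, the optimum should be attained at a configuration where, say, two skills are deterministically sent to distinct states and the third splits its mass; or, more symmetrically, at a ``balanced'' point. I would parametrize the candidates, reduce to a one-variable calculus problem, and evaluate. The key inequality to establish is then simply
\begin{align*}
\log 3 + \tfrac{1}{3}\sum_{n,m} p^{\star}_{n,m}\log \tfrac{p^{\star}_{n,m}}{\sum_{n'} p^{\star}_{n',m}} ~<~ \log 2,
\end{align*}
i.e. the second (non-positive) term is more negative than $\log 2 - \log 3 = -\log(3/2)$. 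A clean way to see this without solving the optimization exactly is to upper bound $\ov{\gI}(3,2)$ by the \emph{unconstrained} mutual information $\mathcal{I}(\text{skill};\text{state})$, which is at most $\mathcal{H}(\text{state}) \le \log 2$ since there are only two states; but we need a strict gap, so I would instead note that with $N=3 > M=2$ the channel from skill to state cannot be injective, forcing $\mathcal{H}(Z \mid S_{\text{state}}) > 0$ under the uniform prior, and quantify this bound.

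The cleanest route, and the one I would actually write, is: (i) observe $\ov{\gI}^{\star}(2,2) = \log 2$ exactly, achieved by the identity assignment; (ii) show $\ov{\gI}^{\star}(3,2) < \log 2$ by arguing that any assignment of $3$ skills to $2$ states under uniform $\rho$ has $\mathcal{H}(S_{\text{state}}) \le \log 2$ while $\mathcal{H}(S_{\text{state}} \mid Z) > 0$ strictly (two of the three deterministic-or-not skill outputs must overlap enough on the two-element state space that conditioning on $Z$ does not pin down the state with certainty — made precise via a short convexity/pigeonhole estimate on $\sum_m (\sum_n p_{n,m})$); hence removing a skill, passing from $N=3$ to $N=2$, strictly increases the uniform-$\rho$ MI. I expect the main obstacle to be step (ii): getting a clean \emph{strict} quantitative lower bound on $\mathcal{H}(S_{\text{state}} \mid Z)$ for the $3$-to-$2$ case that holds for \emph{all} assignments, rather than just at the symmetric optimum — the honest fix is to either solve the (low-dimensional, convex-ish) optimization in closed form, or to invoke that $\ov{\gI}(N,M)$ is continuous and its sup over a compact constraint set is attained, then rule out the boundary configurations one by one. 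This is routine but slightly fiddly, and it is where I would spend the bulk of the write-up.
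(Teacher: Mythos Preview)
Your choice of example ($M=2$ states, $N=3$ versus $N=2$ skills) is exactly the one the paper uses, and your computation $\ov{\gI}^{\star}(2,2)=\log 2$ is correct. However, the argument you label the ``cleanest route'' has a genuine gap.

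You claim that for any assignment of $3$ skills to $2$ states under uniform $\rho$ one has $\mathcal{H}(S\mid Z)>0$ strictly. This is false. Take the deterministic assignment $p_{1,\cdot}=(0,1)$, $p_{2,\cdot}=(0,1)$, $p_{3,\cdot}=(1,0)$: each skill maps to a single state, so conditioning on $Z$ pins down $S$ with certainty and $\mathcal{H}(S\mid Z)=0$. In fact the paper shows this assignment \emph{is} the optimum for $(N,M)=(3,2)$. Your pigeonhole intuition (``two of the three skill outputs must overlap'') establishes $\mathcal{H}(Z\mid S)>0$, not $\mathcal{H}(S\mid Z)>0$; you appear to have swapped the conditioning. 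And $\mathcal{H}(Z\mid S)>0$ only gives $\ov{\gI}<\log 3$, which is not what you need.

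A correct short argument in the same spirit: $\ov{\gI}\le \mathcal{H}(S)\le \log 2$, with equality throughout iff the state marginal is uniform \emph{and} $\mathcal{H}(S\mid Z)=0$. The latter forces each skill to be deterministic, so under uniform $\rho$ the state marginal is $(k/3,(3-k)/3)$ for some integer $k\in\{0,1,2,3\}$, never $(1/2,1/2)$; hence the two equality conditions are incompatible and $\ov{\gI}^{\star}(3,2)<\log 2$ strictly. The paper instead simply exhibits the optimal assignment and computes $\ov{\gI}^{\star}(3,2)=\log 3-\tfrac{2}{3}\approx 0.918$ (in bits) versus $\ov{\gI}^{\star}(2,2)=1$; your fallback of solving the low-dimensional optimization in closed form is essentially this and would also be fine.
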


\begin{proof}
The following example shows that with $M=2$ states, it is beneficial for the uniform-$\rho$ MI maximization to go from $N=3$ to $N=2$ skills. Indeed, we can numerically compute the optimal solutions and we obtain for $N=3$ and $M=2$ that
\begin{align*}
  \ov{\gI}^{\star}(N=3,M=2) \approx 0.918, \quad \quad  p_{n,m}^{\star} = \begin{pmatrix}
0 & 1 \\
0 & 1 \\
1 & 0 
\end{pmatrix}, \quad \quad  q_{n,m}^{\star} = \begin{pmatrix}
0 & 0.5 \\
0 & 0.5 \\
1 & 0 
\end{pmatrix}, \quad \quad \eta = 0.5,
\end{align*}
whereas for $N=2$ and $M=2$, 
\begin{align*}
  \ov{\gI}^{\star}(N=2,M=2) = 1, \quad \quad  p_{n,m}^{\star} = \begin{pmatrix}
0 & 1 \\
1 & 0 
\end{pmatrix}, \quad \quad  q_{n,m}^{\star} = \begin{pmatrix}
0 & 1 \\
1 & 0 
\end{pmatrix}, \quad \quad \eta = 1.
\end{align*}
As a result, $\ov{\gI}^{\star}(N=2,M=2) > \ov{\gI}^{\star}(N=3,M=2)$, which concludes the proof. The intuition why $\ov{\mathcal{I}}^{\star}$ is increased by decreasing $N$ is that for $N=2$ there is one skill per state whereas for $N=3$ the skills must necessarily overlap. Note that this contrasts with the original MI (that also optimizes~$\rho$) where decreasing $N$ cannot improve the optimal MI.
\end{proof}

The previous simple example hints to the fact that the value of the minimum discriminability of the optimal assignment $\eta$ may be a good indicator to determine whether to remove a skill. The following more general lemma indeed shows that a sufficient condition for the uniform-$\rho$ MI to be increased by removing a skill is that $\eta$ is small enough.

\begin{lemma}
Assume without loss of generality that the skill indexed by $N$ has the minimum discriminability $\eta$, i.e., $N \in \argmin_{n} \max_m q^{\star}_{n,m}$. Define 
\begin{align*}
    \Delta(N,\eta) := \log N - \frac{N-1}{N} \log(N-1) + \frac{1}{N} \log \eta.
\end{align*}
If $\Delta(N,\eta) \leq 0$ --- which holds for small enough $\eta$ --- then removing skill $N$ results in a larger uniform-$\rho$ optimal MI, i.e., $\ov{\gI}^{\star}(N,M) < \ov{\gI}^{\star}(N-1,M)$.
\end{lemma}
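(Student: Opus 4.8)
The plan is to exploit the explicit formula for the uniform-$\rho$ optimal MI together with a simple upper/lower bound sandwich. First I would establish a clean lower bound on $\ov{\gI}^{\star}(N-1,M)$: taking the optimal assignment $\{p^\star_{n,m}\}$ achieving $\ov{\gI}^{\star}(N,M)$ and simply \emph{deleting} the row corresponding to skill $N$ (the one with minimum discriminability) gives a feasible assignment on $N-1$ skills. Its uniform-$\rho$ MI can be written in terms of the $N$-skill optimum minus the contribution of the removed skill; the point is that deleting skill $N$ changes the $\log N$ prior term to $\log(N-1)$ and alters the conditional-entropy-like sum only through the rows that shared states with skill $N$. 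The key algebraic observation is that the removed skill's marginal contribution to the objective $\frac{1}{N}\sum_m p^\star_{N,m}\log q^\star_{N,m}$ is at most $\frac{1}{N}\log\eta$ (since each $q^\star_{N,m} \le \eta$ by definition of $\eta$ as that skill's maximum discriminability, and $\sum_m p^\star_{N,m}=1$), and that removing a skill can only \emph{increase} the remaining $q$-values (as the denominators $\sum_n p_{n,m}$ shrink), so the remaining rows' contributions do not decrease.

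Concretely, the chain I expect to write is: $\ov{\gI}^{\star}(N-1,M) \ge \ov{\gI}(N-1,M;\{p^\star_{n,m}\}_{n<N}) \ge \log(N-1) + \frac{1}{N-1}\big(N\ov{\gI}^{\star}(N,M) - N\log N - \log\eta\big)$ or a suitably rearranged version of it — the exact constant bookkeeping is routine once one writes $\ov{\gI}(N,M) = \log N + \frac1N\sum_{n,m}p_{n,m}\log q_{n,m}$ and isolates the $n=N$ term. From there, the claimed inequality $\ov{\gI}^{\star}(N,M) < \ov{\gI}^{\star}(N-1,M)$ reduces to showing $\Delta(N,\eta) \le 0$ suffices, which is exactly the definition $\Delta(N,\eta) = \log N - \frac{N-1}{N}\log(N-1) + \frac1N\log\eta$ after clearing the $\frac{1}{N(N-1)}$ factors. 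I would also note that $\Delta(N,\eta)\to-\infty$ as $\eta\to 0^+$ while the $\eta$-independent part $\log N - \frac{N-1}{N}\log(N-1)$ is bounded (indeed $O(\log N / N)$-ish), so "$\Delta(N,\eta)\le 0$ for small enough $\eta$" is immediate, justifying the parenthetical remark in the statement.

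The main obstacle I anticipate is the "removing a skill can only increase the remaining discriminabilities" step and making sure the deleted assignment is genuinely near-optimal rather than just feasible — i.e., being careful that the inequality goes in the right direction when we pass from $\ov{\gI}(N-1,M;\{p^\star_{n,m}\}_{n<N})$ (evaluated with the \emph{new}, larger $q$-values on $N-1$ skills) up to $\ov{\gI}^{\star}(N-1,M)$. Both inequalities are in the favorable direction, but writing the conditional-entropy term carefully (it is $\frac1N\sum_{n,m}p_{n,m}\log\frac{p_{n,m}}{\sum_{n'}p_{n',m}}$, and the denominator strictly decreases when row $N$ is dropped) requires a short monotonicity argument that $x\log\frac{x}{y}$ is decreasing in $y$ for fixed $x>0$. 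Everything else — the prior-term manipulation and the reduction to the sign of $\Delta(N,\eta)$ — is elementary arithmetic.
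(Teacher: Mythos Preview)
Your proposal is correct and follows essentially the same route as the paper's proof: isolate the $n=N$ row in the formula $\ov{\gI}^{\star}(N,M)=\log N+\frac{1}{N}\sum_{n,m}p^{\star}_{n,m}\log q^{\star}_{n,m}$, bound its contribution by $\frac{1}{N}\log\eta$ via $q^{\star}_{N,m}\le\eta$, and compare the remaining $N-1$ rows to $\ov{\gI}^{\star}(N-1,M)$ using the deleted assignment as a feasible candidate; the paper writes this as the chain $\ov{\gI}^{\star}(N,M)\le \Delta(N,\eta)+\frac{N-1}{N}\,\ov{\gI}^{\star}(N-1,M)$, which is algebraically the same inequality you obtain after multiplying through by $\frac{N-1}{N}$. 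If anything, you are more explicit than the paper about the monotonicity step (that dropping row $N$ shrinks each denominator $\sum_{n'}p^{\star}_{n',m}$ and hence increases the surviving $q$-values), which the paper's third inequality uses without comment.
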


\begin{proof} It holds that
\begin{align*}
     \ov{\gI}^{\star}(N,M) &= \log N + \frac{1}{N} \left( \sum_{n \in [N-1]} \sum_{m \in [M]} p^{\star}_{n,m} \log q^{\star}_{n,m} + \sum_{m \in [M]} p^{\star}_{n,m} \log \eta \right) \\
     &= \log N - \frac{N-1}{N} \log(N-1) \\ 
     &\quad + \frac{N-1}{N} \left( \log(N-1) + \frac{1}{N-1} \sum_{n \in [N-1]} \sum_{m \in [M]} p^{\star}_{n,m} \log q^{\star}_{n,m}  \right) + \frac{1}{N} \log \eta \\
     &=  \Delta(N,\eta) + \frac{N-1}{N} \ov{\gI}^{\star}(N-1,M).
\end{align*}
As a result, if $\Delta(N,\eta) \leq 0$ then $\ov{\gI}^{\star}(N,M) < \ov{\gI}^{\star}(N-1,M)$.
\end{proof}

\newpage

\section{\ALGO Algorithm} \label{app_algo}

\subsection{Visual illustrations of \ALGO's Design Mentioned in Section \ref{sec_UPSIDE}}

\begin{minipage}{0.34\linewidth}
	\center
	\vspace{-0.1in}
	\includegraphics[width=1.1\linewidth]{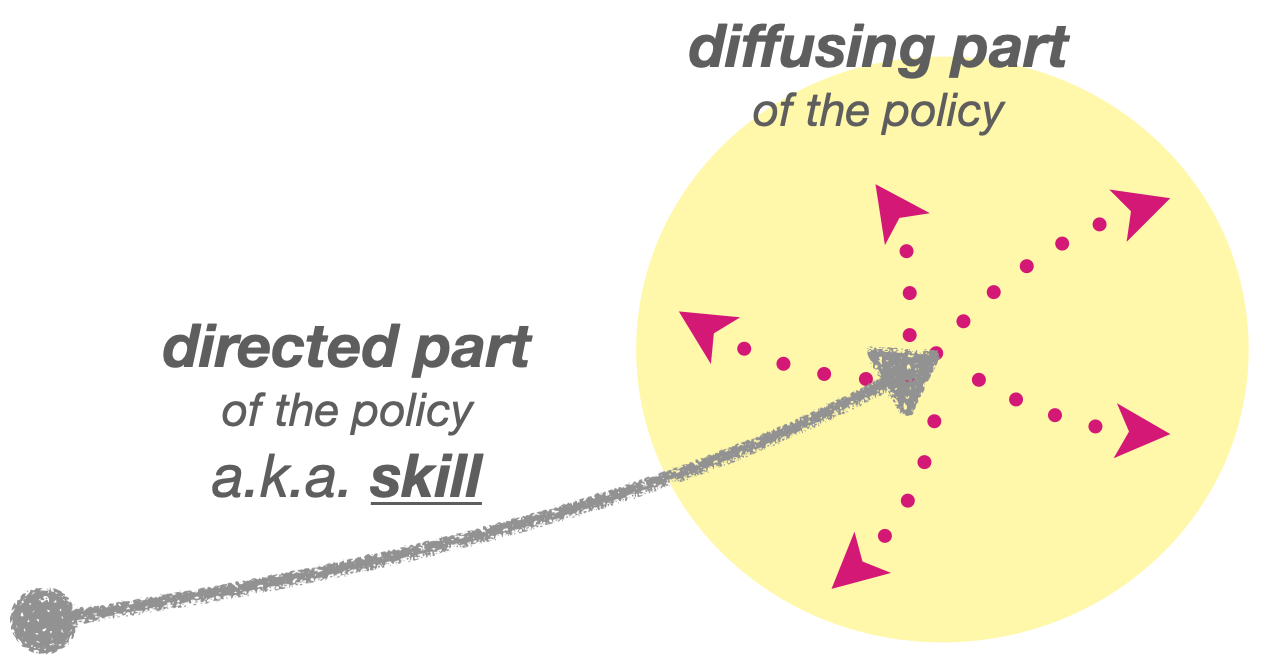}
	\label{fig:skills}
	\vspace{-0.17in}
	\captionof{figure}{Decoupled structure of an \ALGO policy: a directed skill followed by a diffusing part.}
	\label{figure-skills-decoupled-structure}
\end{minipage}%
\hfill%
\begin{minipage}{0.63\linewidth}
    \centering
    \includegraphics[width=0.8\linewidth]{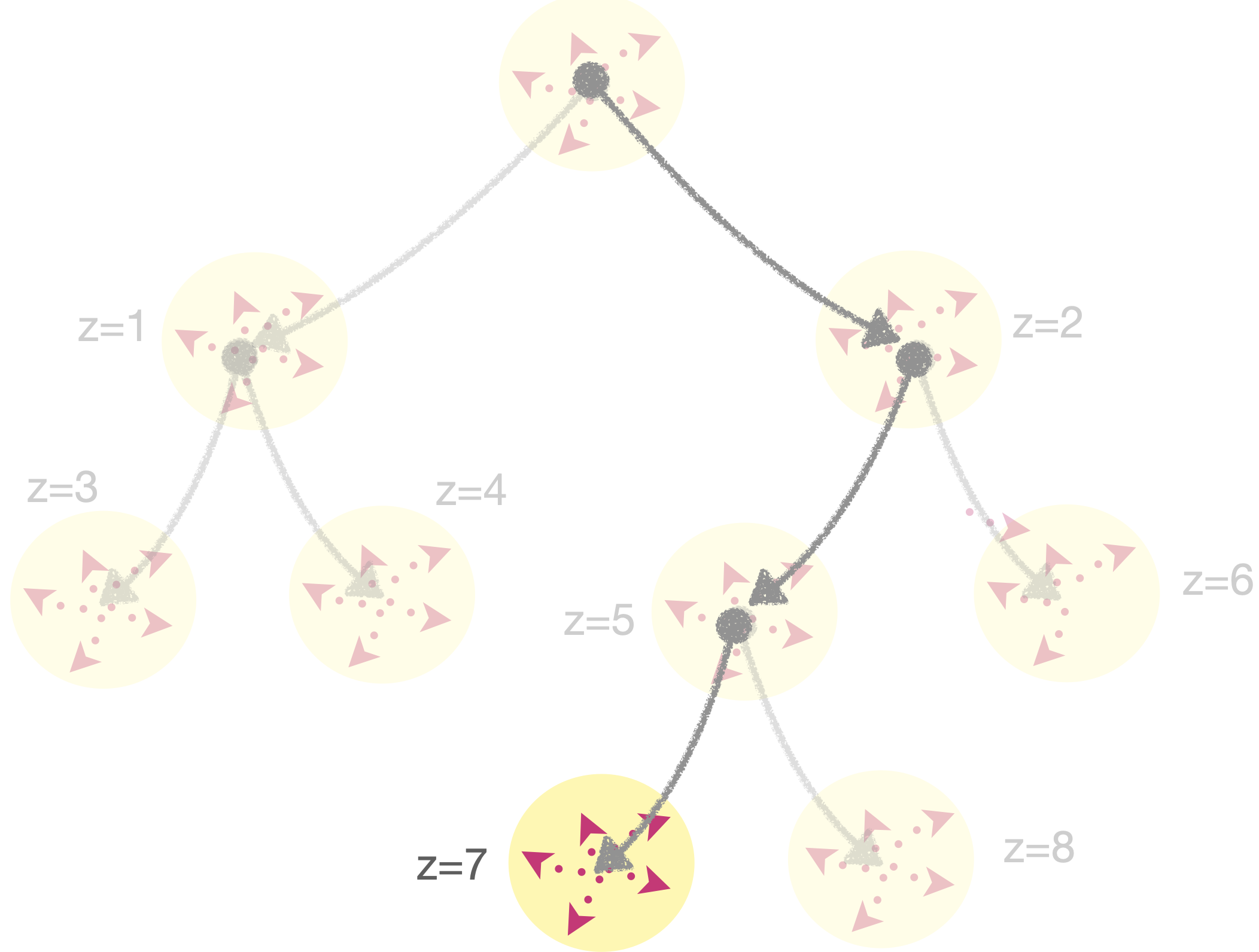}
    \captionof{figure}{In the above \ALGO tree example, executing policy $z=7$ means sequentially composing the skills of policies $z \in \{2, 5, 7\}$ and then deploying the diffusing part of policy $z=7$.}
    \label{fig:tree-example}
\end{minipage}

\subsection{High-Level Approach of \ALGO}

\begin{figure}[h!]
    \centering
    \includegraphics[width=0.99\linewidth]{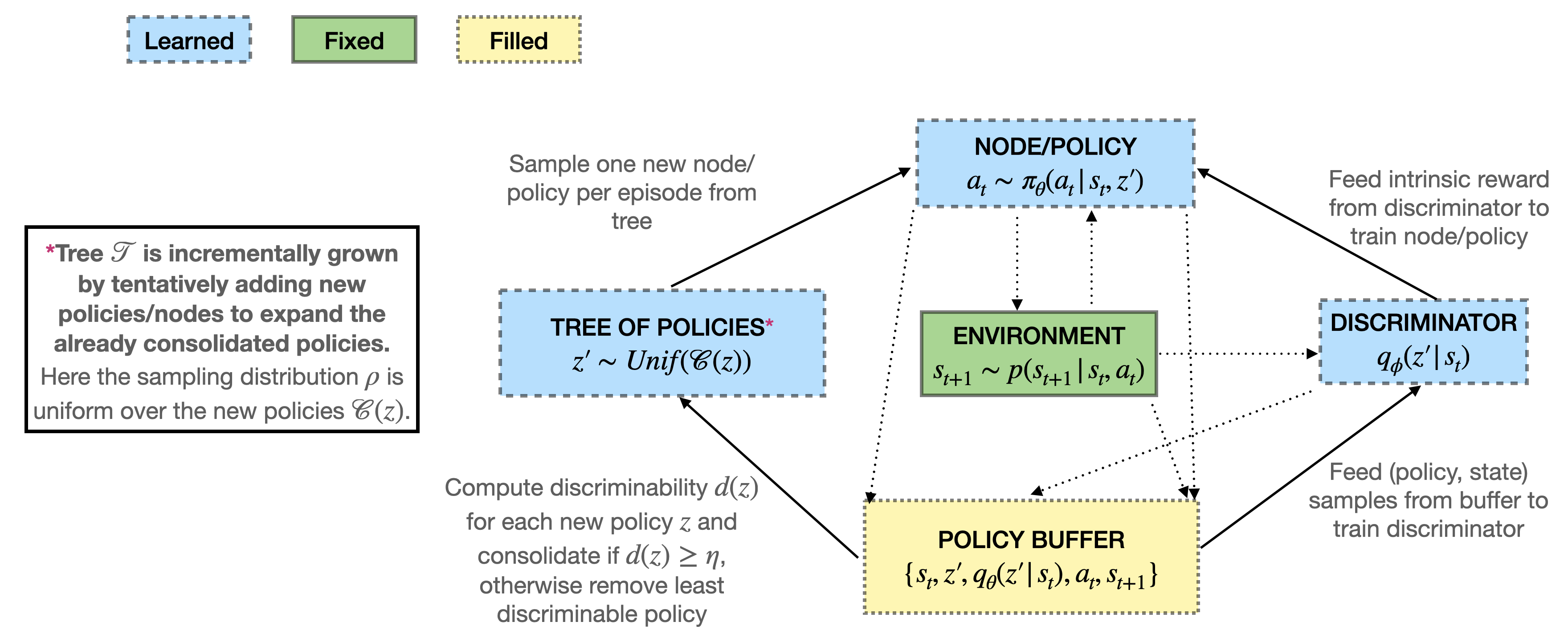}
    \caption{High-level approach of \ALGO.}
    \label{fig:diagram}
\end{figure}

\begin{algorithm}[t!]
	\begin{small}
		\SetAlgoLined
		\LinesNumbered
		\DontPrintSemicolon
		\textbf{Parameters}: Discriminability threshold $\eta \in (0,1)$, branching factor $N^\text{start}, N^\text{max}$\\
		\textbf{Initialize}: Tree $\cT$ initialized as a root node index by $0$, policy candidates $\queue = \{ 0 \}$, state buffers $\mathcal{B}_Z=\{0: [ \ ]\}$\\
		
		\While(\tcp*[h]{tree expansion}){\textup{ $\queue \neq \emptyset$ }}{
			Dequeue a policy $z \in \queue$ and create $N=N^\text{start}$  nodes $\chn$ rooted at $z$ and add new key $z$ to $B_Z$\\
			Instantiate new replay buffer $\mathcal{B}_\text{RL}$ \label{line2-new-rb} \\
            \POL($\mathcal{B}_\text{RL}$, $\mathcal{B}_{Z}$, $\mathcal{T}$, $\mathcal{C}(z)$) \\
			\If(\tcp*[h]{Node addition}){ \textup{$\min_{z' \in \chn} d(z') > \eta$}}
			{  
                \While{\textup{$\min_{z' \in \chn} d(z') > \eta$} \textup{\textbf{and}} $N< N^{\text{max}}$}
                {
                    Increment $N = N+1$ and add one policy to $\mathcal{C}(z)$ \\
                    \POL($\mathcal{B}_\text{RL}$, $\mathcal{B}_{Z}$, $\mathcal{T}$, $\mathcal{C}(z)$) \\
                }
			}
			\Else(\tcp*[h]{Node removal}){
                \While{\textup{$\min_{z' \in \chn} d(z') < \eta$} \textup{\textbf{and}} $N> 1$}
                {
                    Reduce $N = N-1$ and remove least discriminable policy from $\mathcal{C}(z)$ \\
                    \POL($\mathcal{B}_\text{RL}$, $\mathcal{B}_{Z}$, $\mathcal{T}$, $\mathcal{C}(z)$) \\

                }
            }

			\vspace{0.05in}
			Enqueue in $\queue$ the $\eta$-discriminable nodes $\chn$  \\
			
		}	
			\vspace{0.1in}
			\POL(Replay buffer $\mathcal{B}_\text{RL}$, State buffers $\mathcal{B}_Z$, Tree $\mathcal{T}$, policies to update $Z_U$) \\
            \textbf{Optimization parameters:}  patience $K$, policy-to-discriminator update ratio $J$,  $K_\text{discr}$ discriminator update epochs, $K_\text{pol}$ policy update epochs \\
            \textbf{Initialize}: Discriminator $q_\phi$ with $\abs{\cT}$ classes \\
			\For(\tcp*[h]{Training loop}){$K$ \textup{iterations} \label{line2-K}}
			{
				For all $z' \in Z_U$, clear $\mathcal{B}_{Z}[z']$ then collect and add $B$ states from the diffusing part of $\pi(z')$ to it \\
			    Train the discriminator $q_{\phi}$ for $K_\text{discr}$ steps with  dataset $\bigcup_{z' \in \cT}\mathcal{B}_{Z}[z']$. \\ 
	            Compute discriminability $d(z') = \whqphiBzprime = \frac{1}{\abs{\mathcal{B}_{z'}}}\sum_{s \in \mathcal{B}_{z'}} q_\phi(z'|s)$ for all $z'\in Z_U$ \\
	            \If(\tcp*[h]{Early stopping}){\textup{$\min_{z' \in Z_U} d(z') > \eta$}}
	                {\textbf{Break}}
	            
	            	\For(){$J$ \textup{iterations} \label{line2-J}}
			        {
			        	For all $z' \in Z_U$, sample a trajectory from $\pi(z')$ and add to replay buffer $\mathcal{B}_\text{RL}$ \\
		        	    For all $z' \in Z_U$,  update policy $\pi_z'$ for $K_\text{pol}$ steps on replay buffer $\mathcal{B}_\text{RL}$  to optimize the discriminator reward as in Sect.~\ref{subsect_skill_optim} keeping skills from parent policies fixed
			        }
			}
            Compute discriminability $d(z')$ for all $z'\in Z_U$
		
		\caption{Detailed \ALGO}
		\label{main_alg_detailed}
	\end{small}
\end{algorithm}

\subsection{Details of Algorithm \ref{main_alg_short}}
\label{app_detailed_algo}
We give in Alg.\,\ref{main_alg_detailed} a more detailed version of Alg.\,\ref{main_alg_short} and we list some additional explanations below.
\begin{itemize}[leftmargin=.15in,topsep=-1.5pt,itemsep=1pt,partopsep=0pt, parsep=0pt]
    \item When optimizing the discriminator, rather than sampling (state, policy) pairs with equal probability for all nodes from the tree $\cT$, we put more weight (e.g. $3\times$) on already consolidated policies, which seeks to avoid the new policies from invading the territory of the older policies that were previously correctly learned.
    \item A replay buffer $\mathcal{B}_\text{RL}$ is instantiated at every new expansion (line \ref{line2-new-rb}), thus avoiding the need to start collecting data from scratch with the new policies at every \POL\xspace call.
    \item $J$ (line \ref{line2-J}) corresponds to the number of policy updates ratio w.r.t.\,discriminator updates, i.e. for how long the discriminator reward is kept fixed, in order to add stationarity to the reward signal.
    \item Instead of using a number of iterations $K$ to stop the training loop of the \POL\xspace function (line \ref{line2-K}), we use a maximum number of environment interactions $K_\text{steps}$ for node expansion. 
    Note that this is the same for \DIAYNhier and \DIAYNcurr.
    \item The state buffer size $B$ needs to be sufficiently large compared to $H$ so that the state buffers of each policy represent well the distribution of the states generated by the policy's diffusing part. In practice we set $B=10H$.
    \item In \POL, we add $K_\text{initial}$ random (uniform) transitions to the replay buffer for each newly instantiated policies. 
    \item Moreover, in \POL, instead of sampling uniformly the new policies we sample them in a round robin fashion (i.e., one after the other), which can be simply seen as a variance-reduced version of uniform sampling. 
\end{itemize}

\subsection{Illustration of the Evolution of \ALGO's Tree on a Wall-Free Maze}

See Fig.\,\ref{fig:infinite_maze}.

\newcommand*\rectangled[1]{%
   \tikz[baseline=(R.base)]\node[draw,rectangle,inner sep=0.5pt](R) {#1};\!
}

\begin{figure}[h!]
\begin{minipage}{0.3\linewidth}
\includegraphics[width=0.99\linewidth]{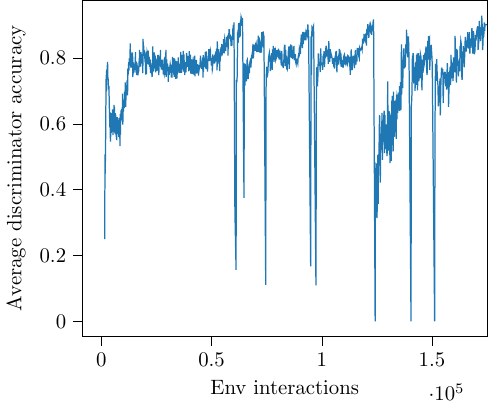} 
\includegraphics[width=0.99\linewidth]{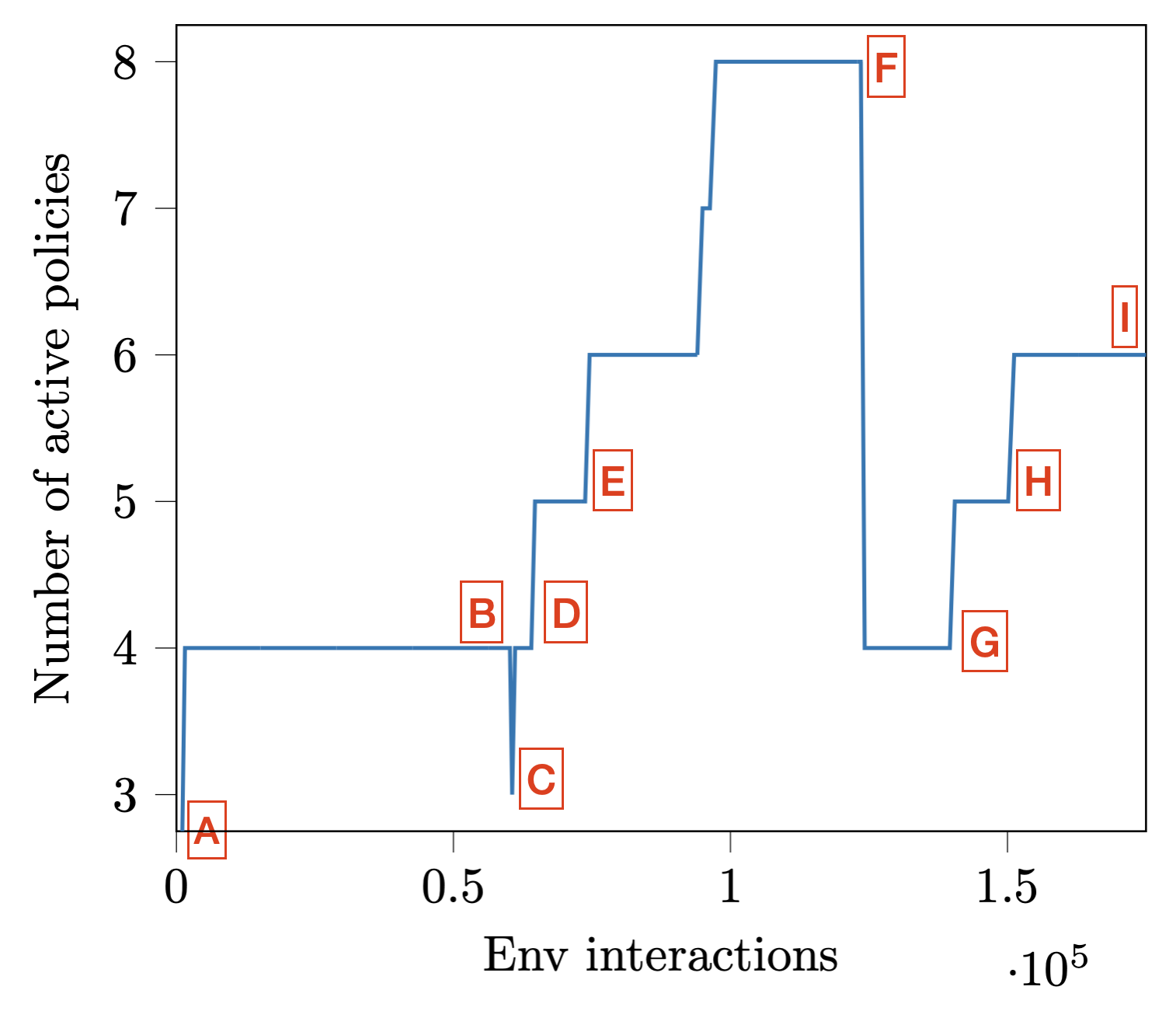}
\end{minipage}%
\hfill%
\begin{minipage}{0.69\linewidth}
\hspace{0.05in}
{\tiny \textcolor{red}{\rectangled{A}}} \hspace{-0.06in} \includegraphics[scale=0.3,trim
=22 20 20 20,clip]{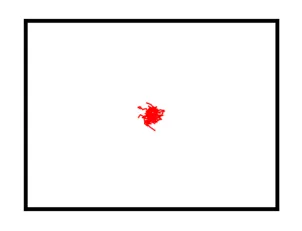}  \hspace{0.05in}
{\tiny \textcolor{red}{\rectangled{B}}} \hspace{-0.06in} \includegraphics[scale=0.3,trim
=22 20 20 20,clip]{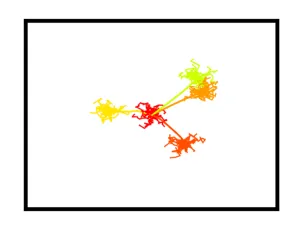}   \hspace{0.05in}
{\tiny \textcolor{red}{\rectangled{C}}} \hspace{-0.06in} \includegraphics[scale=0.3,trim
=22 20 20 20,clip]{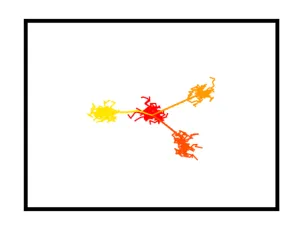} 

\vspace{0.05in}
\hspace{0.05in}
{\tiny \textcolor{red}{\rectangled{D}}} \hspace{-0.06in} \includegraphics[scale=0.3,trim
=22 20 20 20,clip]{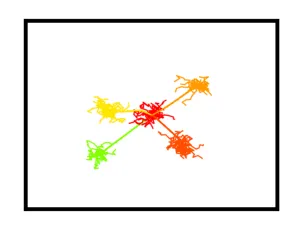} \hspace{0.05in}
{\tiny \textcolor{red}{\rectangled{E}}} \hspace{-0.06in} \includegraphics[scale=0.3,trim
=22 20 20 20,clip]{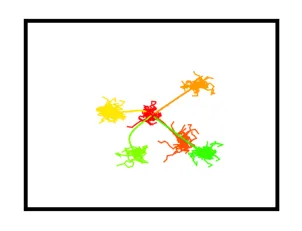}
\hspace{0.05in}
{\tiny \textcolor{red}{\rectangled{F}}} \hspace{-0.06in} \includegraphics[scale=0.3,trim
=22 20 20 20,clip]{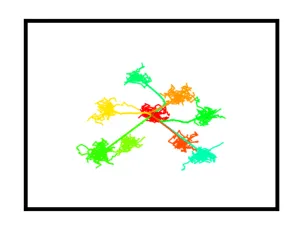}   

\vspace{0.05in}
\hspace{0.05in}
{\tiny \textcolor{red}{\rectangled{G}}} \hspace{-0.06in} \includegraphics[scale=0.3,trim
=22 20 20 20,clip]{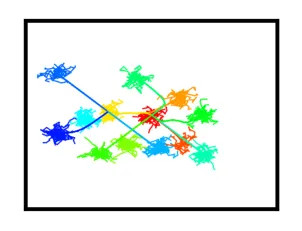}  \hspace{0.05in}
{\tiny \textcolor{red}{\rectangled{H}}} \hspace{-0.06in} \includegraphics[scale=0.3,trim
=22 20 20 20,clip]{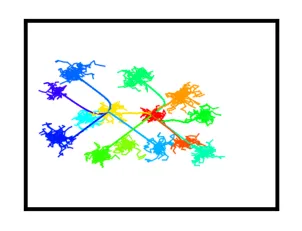}  \hspace{0.05in}
{\tiny \textcolor{red}{\rectangled{I}}} \hspace{-0.06in} \includegraphics[scale=0.3,trim
=22 20 20 20,clip]{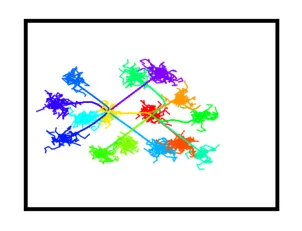} 
\end{minipage}
\caption{\textbf{Fine-grained evolution of the tree structure on a wall-free maze with $N^\text{start}=4$ and $N^\text{max}=8$.} The environment is a wall-free continuous maze with initial state $s_0$ located at the center of the maze. Image~A represents the diffusing part around $s_0$. In image~B, $N^\text{start}=4$ policies are trained, yet one of them (in lime yellow) is not sufficiently discriminable, thus it is pruned, resulting in image~C. A small number of interactions is enough to ensure that the three policies are $\eta$-discriminable (image~C). In image~D, a fourth policy (in green) is able to become $\eta$-discriminable. New policies are added, trained and $\eta$-discriminated from 5 policies (image~E) to $N^\text{max}=8$ policies (image~F). Then a policy (in yellow) is expanded with $N^\text{start}=4$ policies (image~G). They are all $\eta$-discriminable so additional policies are added (images~H, I, \ldots). The process continues until convergence or until time-out (as done here). On the left, we plot the number of active policies (which represents the number of policies that are being trained at the current level of the tree) as well as the average discriminator accuracy over the active policies.}
\label{fig:infinite_maze}
\end{figure}

\subsection{Illustration of Evolution of \ALGO's Tree on the Bottleneck Maze}

See Fig.\,\ref{fig:inc_exp_upside_tree_bn}.

\begin{figure}[h!]
\centering
\includegraphics[width=0.231\linewidth,height=0.231\linewidth]{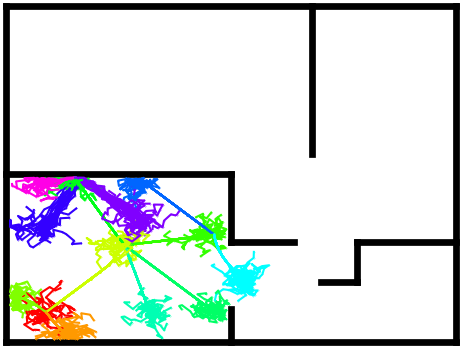} ~
\includegraphics[width=0.23\linewidth,height=0.23\linewidth]{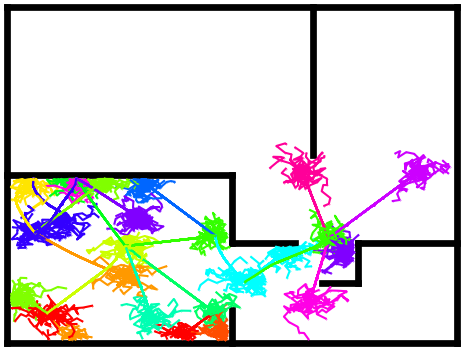} ~
\includegraphics[width=0.231\linewidth,height=0.231\linewidth]{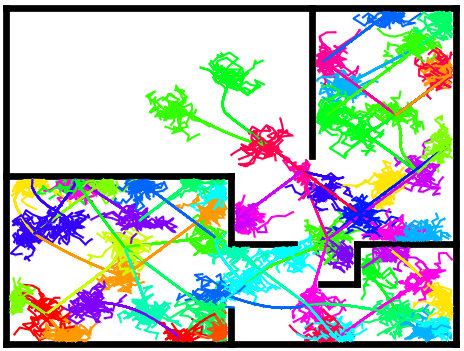} ~
\includegraphics[width=0.23\linewidth,height=0.23\linewidth]{figures/visu-bn-upside.png}
\caption{Incremental expansion of the tree learned by \ALGO towards unexplored regions of the state space in the Bottleneck Maze.}
\label{fig:inc_exp_upside_tree_bn}
\end{figure}

\newpage
\section{Experimental Details} \label{app_exp_details}

\subsection{Baselines}
\label{ap_baselines_details}
\paragraph{\DIAYN-$N_Z$.} This corresponds to the original \DIAYN algorithm~\citep{eysenbach2018diversity} where $N_Z$ is the number of skills to be learned. In order to make the architecture more similar to \ALGO, we use distinct policies for each skill, i.e. they do not share weights as opposed to \cite{eysenbach2018diversity}. While this may come at the price of sample efficiency, it may also help put lesser constraint on the model (e.g. gradient interference).

\paragraph{\DIAYNcurr.} We augment \DIAYN with a curriculum that enables to be less dependent on an adequate tuning of the number of skills of \DIAYN. We consider the curriculum of \ALGO where we start learning with $N^\text{start}$ policies during a period of time/number of interactions. If the configuration satisfies the discriminablity threshold $\eta$, a skill is added, otherwise a skill is removed or learning stopped (as in Alg.\,\ref{main_alg_detailed}, lines 5-12). Note that the increasing version of this curriculum is similar to the one proposed in \VALOR \citep[][Sect.\,3.3]{achiam2018variational}. In our experiments, we use $N^\text{start}=1$.
    
\paragraph{\DIAYNhier.} We extend \DIAYN through the use of a hierarchy of directed skills, built following the \ALGO principles. The difference between \DIAYNhier and \ALGO is that the discriminator reward is computed over the entire directed skill trajectory, while it is guided by the diffusing part for \ALGO.  This introduced baseline can be interpreted as an ablation of \ALGO without the decoupled structure of policies.

\paragraph{\SMM.} We consider \SMM \citep{lee2019efficient} as it is state-of-art in terms of coverage, at least on long-horizon control problems, although \cite{campos2020explore} reported its poor performance in hard-to-explore bottleneck mazes. We tested the regular \SMM version, i.e., learning a state density model with a VAE, yet we failed to make it work on the maze domains that we consider. As we use the cartesian $(x,y)$ positions in maze domains, learning the identity function on two-dimensional input data is too easy with a VAE, thus preventing the benefits of using a density model to drive exploration.   
In our implementation, the exploration bonus is obtained by maintaining a multinomial distribution over “buckets of states” obtained by discretization (as in our coverage computation), resulting in a computation-efficient implementation that is more stable than the original VAE-based method.
Note that the  state distribution is computed using states from  past-but-recent policies as suggested in the original paper.

\paragraph{\EDL.} We consider \EDL \citep{campos2020explore} with the strong assumption of an available state distribution oracle (since replacing it by \SMM does not lead to satisfying results in presence of bottleneck states as shown in \citealp[][page 7]{campos2020explore}: ``We were unable to explore this type of maze effectively with \SMM''). In our implementation, the oracle samples states uniformly in the mazes avoiding the need to handle a complex exploration, but this setting is not realistic when facing unknown environments.

\subsection{Architecture and Hyperparameters} \label{HP}
\label{app_hyperparameters}

The architecture of the different methods remains the same in all our experiments, except that the number of hidden units changes across considered environments. For \ALGO, flat \ALGO (i.e., \ALGO with a tree depth of $1$), \DIAYN, \DIAYNcurr, \DIAYNhier and \SMM the multiple policies do not share weights, however \EDL policies all share the same network because of the constraint that the policy embedding $z$ is learnt in a supervised fashion with the VQ-VAE rather than the unsupervised RL objective. 
We consider decoupled actor and critic optimized with the \TDthree algorithm \citep{fujimoto2018addressing} though we also tried \SAC \citep{haarnoja2018soft} which showed equivalent results than \TDthree with harder tuning.\footnote{For completeness, we report here the performance of \DIAYN-\SAC in the continuous mazes: \DIAYN-\SAC with $N_Z=10$ on Bottleneck maze: 21.0 (± 0.50); on U-maze: 17.5 (± 0.75), to compare with \DIAYN-\TDthree with $N_Z=10$ on Bottleneck maze: 17.67 (± 0.57); on U-maze: 14.67 (± 0.42). We thus see that \DIAYN-\SAC fails to cover the state space, performing similarly to \DIAYN-\TDthree (albeit over a larger range of hyperparameter search, possibly explaining the slight improvement).} The actor and the critic have the same architecture that processes observations with a two-hidden layers (of size $64$ for maze environments and $256$ for control environments) neural networks.
The discriminator is a two-hidden (of size $64$) layer model with output size the number of skills in the tree.

\paragraph{Common (for all methods and environments) optimization hyper-parameters:}
\begin{itemize}[leftmargin=.15in,topsep=-1.5pt,itemsep=1pt,partopsep=0pt, parsep=0pt]
    \item Discount factor: $\gamma=0.99$
    \item $\sigma_\text{TD3}=\{0.1,0.15,0.2\}$
    \item Q-functions soft updates temperature $\tau=0.005$
    \item Policy Adam optimizer with learning rate $lr_\text{pol}=\{1e^{-3},1e^{-4}\}$
    \item policy inner epochs $K_\text{pol}=\{10,100\}$
    \item policy batch size $B_\text{pol}=\{64\}$
    \item Discriminator delay: $J=\{1,10\}$
    \item Replay buffer maximum size: $1e6$
    \item $K_\text{initial}=1e3$
\end{itemize}
We consider the same range of hyper-parameters in the downstream tasks.

\paragraph{\ALGO, \DIAYN and \SMM variants (common for all environments) optimization hyper-parameters:}
\begin{itemize}[leftmargin=.15in,topsep=-1.5pt,itemsep=1pt,partopsep=0pt, parsep=0pt]
    \item Discriminator batch size $B_\text{discr}=64$
    \item Discriminator Adam optimizer with learning rate $lr_\text{discr}=\{1e^{-3},1e^{-4}\}$
    \item discriminator inner epochs $K_\text{discr}=\{10,100\}$
    \item Discriminator delay: $J=\{1,10\}$
    \item State buffer size $B=10H$ where the diffusing part length $H$ is environment-specific.
\end{itemize}

\paragraph{\EDL optimization hyper-parameters:} We kept the same as \cite{campos2020explore}. The VQ-VAE's architecture consists of an encoder that takes states as an input and maps them to a code with $2$ hidden layers with $128$ hidden units and a final linear layer, and the decoder takes the code and maps it back to states also with $2$ hidden layers with $128$ hidden units. It is trained on the oracle state distribution, then kept fixed during policy learning. Contrary to \ALGO, \DIAYN and \SMM variants, the reward is stationary.

\begin{itemize}[leftmargin=.15in,topsep=-1.5pt,itemsep=1pt,partopsep=0pt, parsep=0pt]
    \item  $\beta_\text{commitment}=\{0.25, 0.5\}$  
    \item  VQ-VAE's code size $16$
    \item VQ-VAE batch size $B_\text{vq-vae}=\{64, 256\}$
    \item total number of epochs: 5000 (trained until convergence)
    \item VQ-VAE Adam optimizer with learning rate $lr_\text{vq-vae}=\{1e^{-3},1e^{-4}\}$
\end{itemize}
 
\paragraph{Maze specific hyper-parameters:}
\begin{itemize}[leftmargin=.15in,topsep=-1.5pt,itemsep=1pt,partopsep=0pt, parsep=0pt]
    \item $K_\text{steps}=5e4$ (and in time $10$ minutes)
    \item $T=H=10$
    \item Max episode length $H_\text{max}=200$
    \item Max number of interactions $T_\text{max}=1e^7$ during unsupervised pre-training and downstream tasks.
\end{itemize}

\paragraph{Control specific optimization hyper-parameters:}
\begin{itemize}[leftmargin=.15in,topsep=-1.5pt,itemsep=1pt,partopsep=0pt, parsep=0pt]
    \item $K_\text{steps}=1e5$ (and in time $1$ hour)
    \item $T=H=50$
    \item Max episode length $H_\text{max}=250$
    \item Max number of interactions $T_\text{max}=1e^7$ during unsupervised pre-training and downstream tasks.
\end{itemize}

Note that hyperparameters are kept fixed for the downstream tasks too.

\subsection{Experimental protocol}
\label{experimental_protocol}
We now detail the experimental protocol that we followed, which is common for both \ALGO and baselines, on all environments. It consists in the following three stages:

\paragraph{Unsupervised pre-training phase.} Given an environment, each algorithm is trained without any extrinsic reward on $N_{\text{unsup}}=3$ seeds which we call \textit{unsupervised seeds} (to account for the randomness in the model weights' initialization and environment stochasticity if present). Each training lasts for a maximum number of $T_{\max}$ environment steps (split in episodes of length $H_{\max}$). This protocol actually favors the baselines since by its design, \ALGO may decide to have fewer environment interactions than $T_{\max}$ thanks to its termination criterion (triggered if it cannot fit any more policies); for instance, all baselines where allowed $T_\text{max}=1e7$ on the maze environments, but \ALGO finished at most in $1e6$ environment steps fitting in average $57$ and $51$ policies respectively for the Bottleneck Maze and U-Maze.

 \paragraph{Model selection.} For each unsupervised seed, we tune the hyper-parameters of each algorithm according to a certain performance metric. For the baselines, we consider the cumulated intrinsic reward (as done in e.g., \citealp{strouse2021learning}) averaged over stochastic roll-outs. 
For  \ALGO, \DIAYNhier and \DIAYNcurr, the model selection criterion is the number of consolidated policies, i.e., how many policies were $\eta$-discriminated during their training stage. 
For each method, we thus have as many models as seeds, i.e. $N_{\text{unsup}}$.

 \paragraph{Downstream tasks.} For each algorithm, we evaluate the $N_{\text{unsup}}$ selected models on a set of tasks. All results on downstream tasks will show a performance averaged over the $N_{\text{unsup}}$ seeds.
    \begin{itemize}[leftmargin=.2in]
        \item \textbf{Coverage.} We evaluate to which extent the state space has been covered by discretizing the state space into buckets ($10$ per axis on the continuous maze domains) and counting how many buckets have been reached. To compare the global coverage of methods (and also to be fair w.r.t.\,the amount of injected noise that may vary across methods), we roll-out for each model its associated deterministic policies. 

        \item \textbf{Fine-tuning on goal-reaching task.}  We consider goal-oriented tasks in the discounted episodic setting where the agent needs to reach some unknown goal position within a certain radius (i.e., the goal location is unknown until it is reached once) and with sparse reward signal (i.e., reward of $1$ in the goal location, $0$ otherwise). The environment terminates when goal is reached or if the number of timesteps is larger than $H_{\max}$. The combination of \textit{unknown goal location and sparse reward} makes the exploration problem very challenging, and calls upon the ability to first cover (for goal finding) and then navigate (for reliable goal reaching) the environment efficiently.  To evaluate performance in an exhaustive manner, we discretize the state space into $B_{\text{goal}}=14$ buckets and we randomly sample $N_\text{goal}=3$ from each of these buckets according to what we call \textit{goal seeds} (thus there are $B_{\text{goal}} \times N_\text{goal}=10$ different goals in total). For every goal seed, we initialize each algorithm with the set of policies learned during the unsupervised pre-training. We then roll-out each policy during $N_\text{explo}$ episodes to compute the cumulative reward of the policy, and select the best one to fine-tune. On \ALGO, we complete the selected policy (of length denoted by $L$) by replacing the diffusing skill with a skill whose length is the remaining number of interactions left, i.e. $H_{\max} - L$. 
        The ability of selecting a good policy is intrinsically linked to the coverage performance of the model, but also to few-shot adaptation.
        Learning curves and performance are averaged over \textit{unsupervised seeds}, \textit{goal seeds}, and over roll-outs of the stochastic policy. 
        Since we are in the discounted episodic setting, fine-tuning makes sense, to reach as fast as possible the goal. This is particularly important as \ALGO, because of its tree policy structure, can reach the goal sub-optimally w.r.t the discount.
        On the maze environments, we consider all unsupervised pre-training baselines as well as ``vanilla'' baselines trained from scratch during the downstream tasks: \TDthree \citep{fujimoto2018addressing} and \ICM \citep{pathak2017curiosity}. 
        In the Ant environment, we also consider $N_\text{goal}=3$ and $B_\text{goal}=14$ in the $[-8,8]^2$ square.
    \end{itemize}

\newpage
\newpage

\begin{figure}[t!]
\centering
\subfloat[\ALGO discriminator]{\includegraphics[width=0.27\linewidth,height=0.27\linewidth]{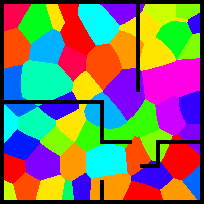}} \quad \quad
\subfloat[\DIAYN discriminator]{\includegraphics[width=0.27\linewidth,height=0.27\linewidth]{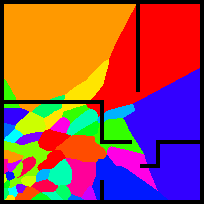}} \quad \quad
\subfloat[\EDL VQ-VAE]{\includegraphics[width=0.27\linewidth,height=0.27\linewidth]{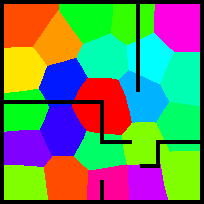}} 
\caption{Environment divided in colors according to the most likely latent variable $Z$, according to \textit{(from left to right)} the discriminator learned by \ALGO, the discriminator learned by \DIAYN and the VQ-VAE learned by \EDL. Contrary to \DIAYN, \ALGO's optimization enables the discriminator training and the policy training to catch up to each other, thus nicely clustering the discriminator predictions across the state space. \EDL's VQ-VAE also manages to output good predictions (recall that we consider the \EDL version with the strong assumption of the available state distribution oracle, see \citealp{campos2020explore}), yet the skill learning is unable to cover the entire state space due to exploration issues and sparse rewards. }
\label{fig:bn_predictions}
\end{figure}

\begin{figure}[t!]
    \centering
\subfloat[\SMM]{\includegraphics[width=1.25in,height=1.25in]{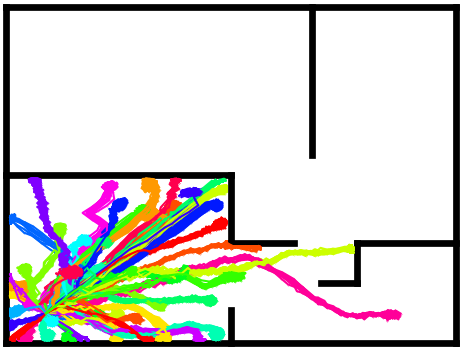}} \quad
\subfloat[\DIAYNcurr]{\includegraphics[width=1.25in,height=1.25in]{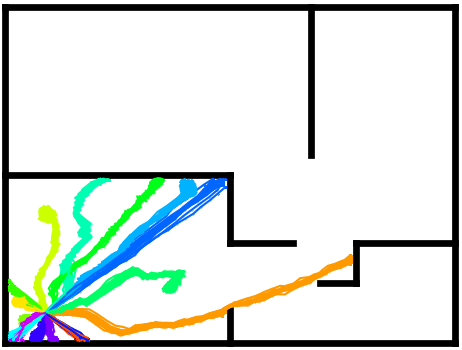}} \quad
\subfloat[\DIAYNhier]{\includegraphics[width=1.25in,height=1.25in]{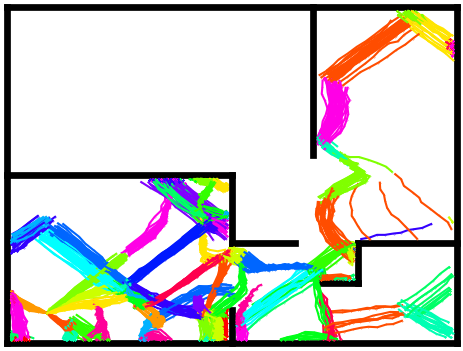}} \quad
\subfloat[Flat \ALGO]{\includegraphics[width=1.25in,height=1.25in]{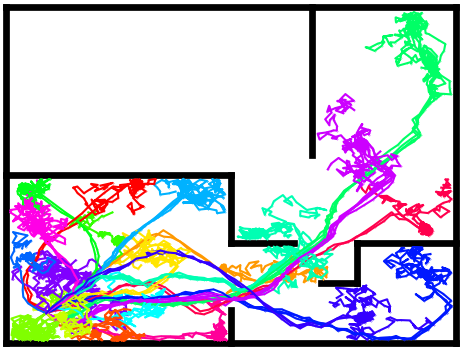}} 
    \caption{Complement to Fig.\,\ref{fig:visu-bn}: Visualization of the policies learned on the Bottleneck Maze for the remaining methods.}
    \label{fig:rest-visu-bnmaze}
\end{figure}

\begin{figure}[t!]
    \centering
\subfloat[\SMM]{\includegraphics[scale=0.24]{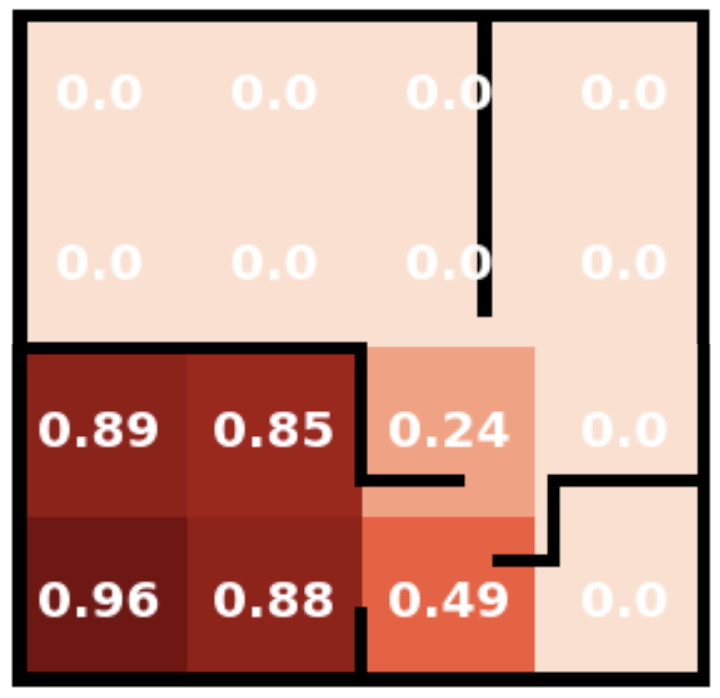}} \quad
\subfloat[\TDthree]{\includegraphics[scale=0.3]{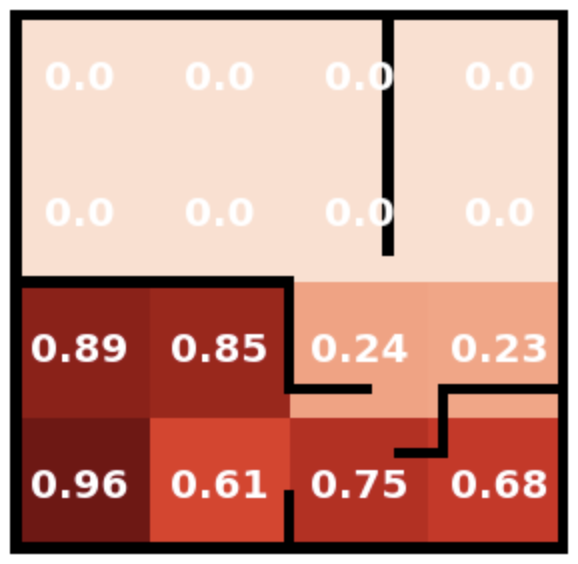}} 
    \caption{Complement of Fig.\,\ref{fig:heat_map_bn}: Heatmaps of downstream task performance after fine-tuning for the remaining methods.}
    \label{fig:rest-bn-heatmaps}
\end{figure}

\section{Additional Experiments} \label{app-additional-experiments}

\subsection{Additional results on Bottleneck Maze}

Here we include \textbf{1)} Fig.\,\ref{fig:bn_predictions} for an analysis of the predictions of the discriminator (see caption for details); \textbf{2)} Fig.\,\ref{fig:rest-visu-bnmaze} for the policy visualizations for the remaining methods (i.e., those not reported in Fig.\,\ref{fig:visu-bn}; \textbf{3)} Fig.\,\ref{fig:rest-bn-heatmaps} for the downstream task performance for the remaining methods (i.e., those not reported in Fig.\,\ref{fig:heat_map_bn}).

\subsection{Additional Results on U-Maze}

Fig.\,\ref{fig:visu-umaze} visualizes the policies learned during the unsupervised phase (i.e., the equivalent of Fig.\,\ref{fig:visu-bn} for the U-Maze), and Fig.\,\ref{fig:heat_map_u} reports the heatmaps of downstream task performance (i.e., the equivalent of Fig.\,\ref{fig:heat_map_bn} for the U-Maze). The conclusion is the same as on the Bottleneck Maze described in Sect.\,\ref{sec_experiments}: \ALGO clearly outperforms all the baselines, both in coverage (Fig.\,\ref{fig:visu-umaze}) and in unknown goal-reaching performance (Fig.\,\ref{fig:heat_map_u}) in the downstream task phase.

\begin{figure}[t!]
    \centering
\subfloat[\ALGO]{\includegraphics[width=1.4in,height=1.4in]{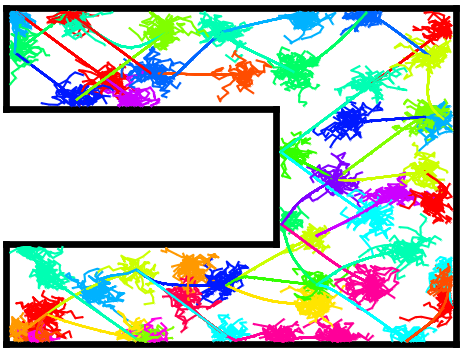}} \quad \quad \quad 
\subfloat[\DIAYN]{\includegraphics[width=1.4in,height=1.4in]{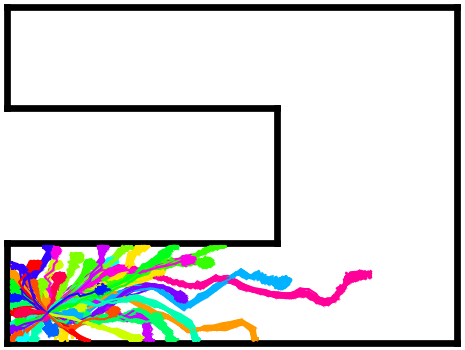}} \quad \quad \quad 
\subfloat[\EDL]{\includegraphics[width=1.4in,height=1.4in]{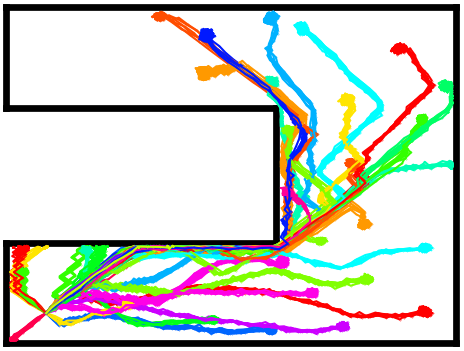}} \\
\subfloat[\SMM]{\includegraphics[width=1.25in,height=1.25in]{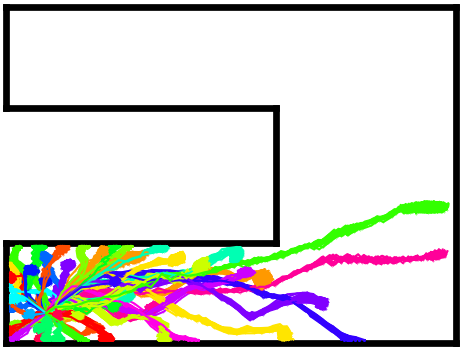}} \quad
\subfloat[\DIAYNcurr]{\includegraphics[width=1.25in,height=1.25in]{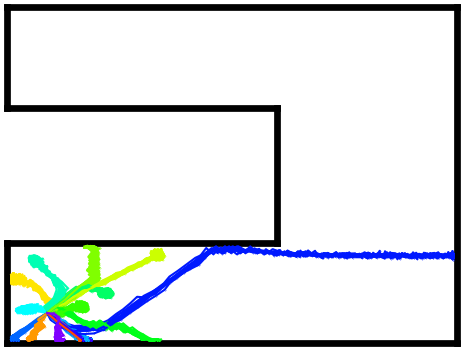}} \quad
\subfloat[\DIAYNhier]{\includegraphics[width=1.25in,height=1.25in]{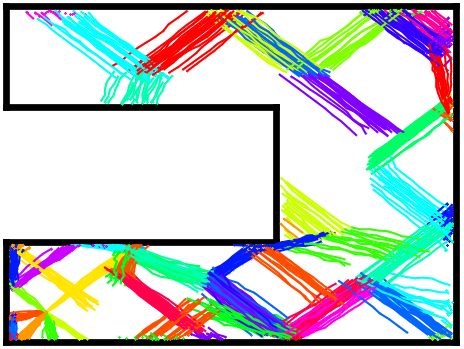}} \quad
\subfloat[Flat \ALGO]{\includegraphics[width=1.25in,height=1.25in]{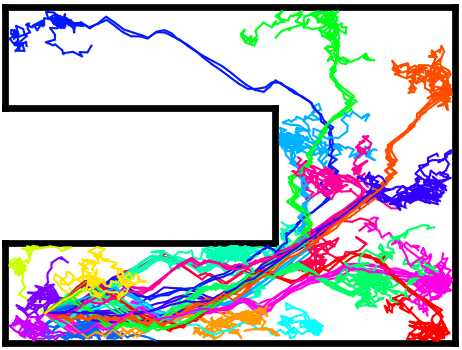}}
    \caption{Visualization of the policies learned on U-Maze. This is the equivalent of Fig.\,\ref{fig:visu-bn} for U-Maze.}
    \label{fig:visu-umaze}
\end{figure}

\begin{figure}[t!]
\centering
\subfloat[\ALGO before fine-tuning]{\includegraphics[scale=0.3]{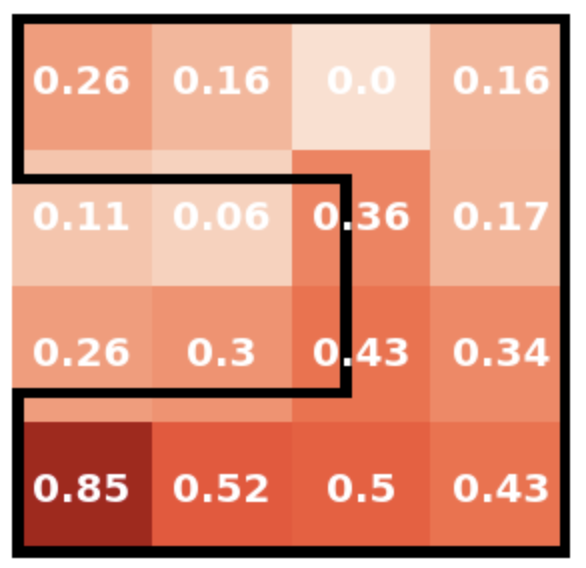}}~
\subfloat[\ALGO]{\includegraphics[scale=0.3]{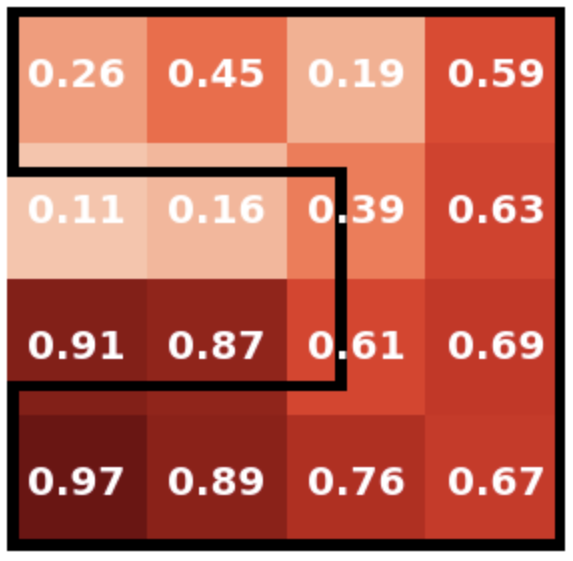}}~
\subfloat[\DIAYN]{\includegraphics[scale=0.3]{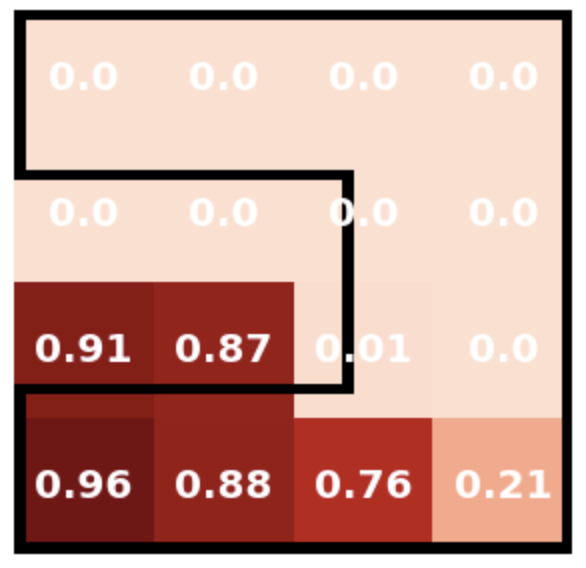}} ~
\subfloat[\EDL]{\includegraphics[scale=0.3]{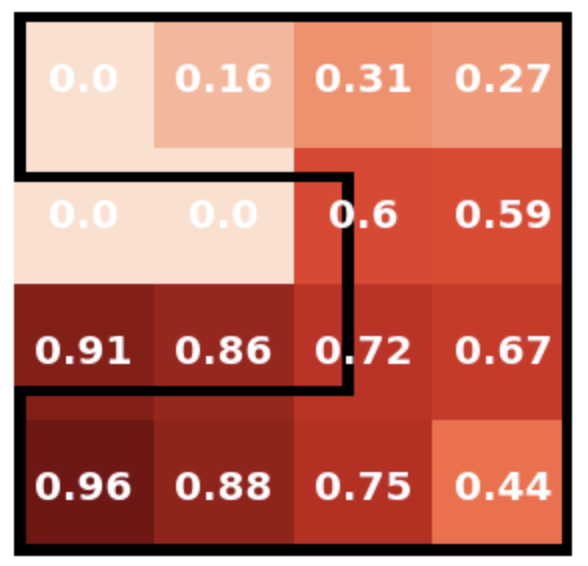}} \\
\subfloat[\ICM]{\includegraphics[scale=0.3]{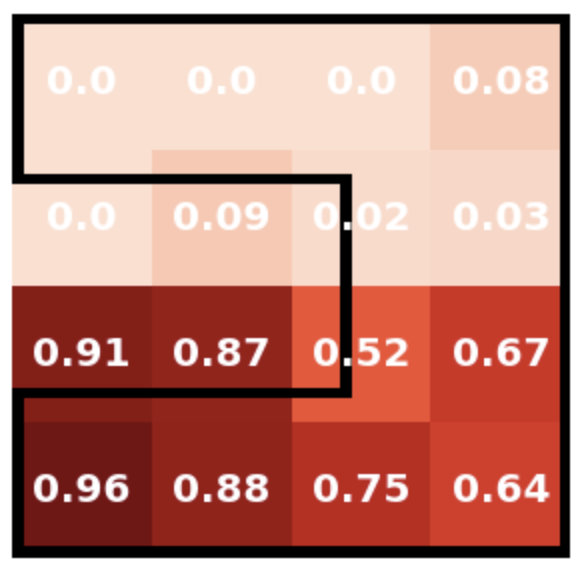}} ~
\subfloat[\SMM]{\includegraphics[scale=0.3]{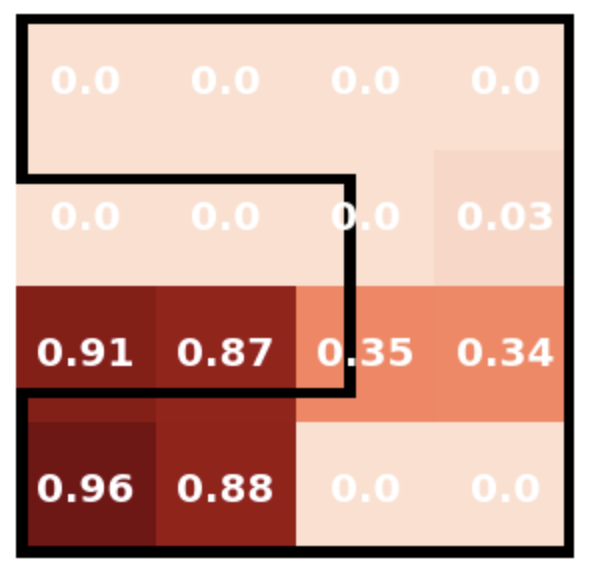}} ~
\subfloat[\TDthree]{\includegraphics[scale=0.3]{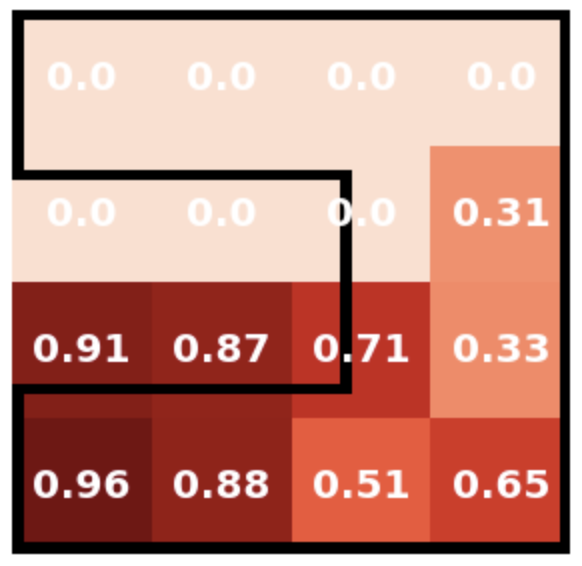}} 
\caption{Heat maps of downstream task performance on U-Maze. This is the equivalent of Fig.\,\ref{fig:heat_map_bn} for U-Maze.}
\label{fig:heat_map_u}
\end{figure}

\newpage

\begin{figure}[t!]
    \centering
\subfloat[Bottleneck Maze]{\includegraphics[scale=0.7]{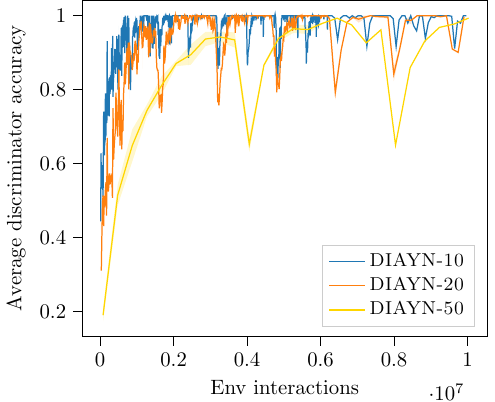}} \hspace{0.3in}
\subfloat[Ant]{\includegraphics[scale=0.7]{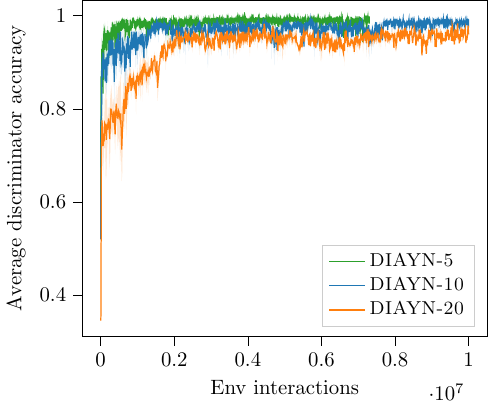}} \\

\vspace{0.2in}
\subfloat[Half-Cheetah]{\includegraphics[scale=0.7]{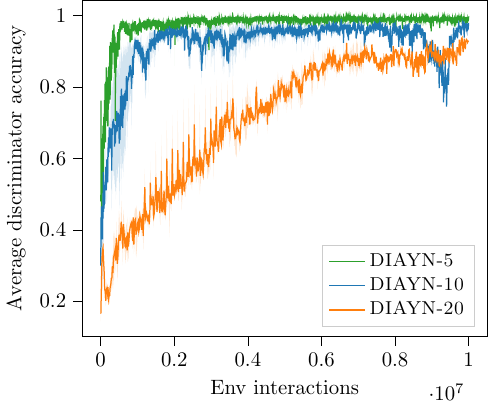}} \hspace{0.3in}
\subfloat[Walker2d]{\includegraphics[scale=0.7]{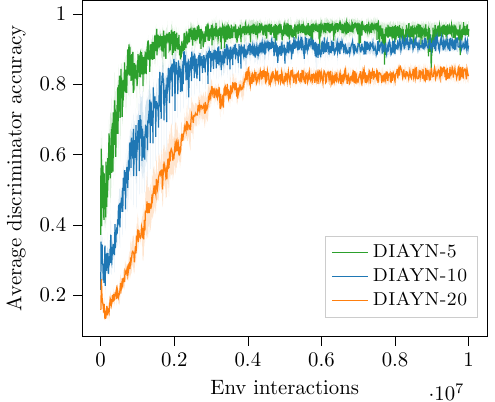}} 
    \caption{Average discriminability of the \DIAYN-$N_Z$ policies. The smaller $N_Z$ is, the easier it is to obtain a close-to-perfect discriminability. However, even for quite large $N_Z$ ($50$ for mazes and $20$ in control environments), \DIAYN is able to achieve a good discriminator accuracy, most often because policies learn how to ``stop'' in some state.}
    \label{fig:discr_diayn}
\end{figure}

\subsection{Analysis of the discriminability}
\label{ap:discriminability_analysis}
In Fig.\,\ref{fig:discr_diayn} (see caption) we investigate the average discriminability of \DIAYN depending on the number of policies $N_Z$.

\newpage

\subsection{Ablation on the lengths $T$ and $H$ of the \ALGO policies} \label{app_ablation_TH}

Our ablation on the mazes in Fig.\,\ref{fig:ablation_T_H} investigates the sensitiveness of \ALGO w.r.t.\,$T$ and $H$, the lengths of the directed skills and diffusing parts of the policies. For the sake of simplicity, we kept $T=H$. It shows that the method is quite robust to reasonable choices of $T$ and $H$, i.e., equal to $10$ (as done in all other experiments) but also $20$ or $30$. Naturally, the performance degrades if $T, H$ are chosen too large w.r.t.\,the environment size, in particular in the bottleneck maze which requires ``narrow'' exploration, thus composing disproportionately long skills hinders the coverage. For $T=H=50$, we recover the performance of flat \ALGO.

\begin{figure}[t]
\centering
\subfloat[$T=H=10$]{\includegraphics[width=0.2\linewidth,height=0.2\linewidth,trim
=70 70 70 70,clip]{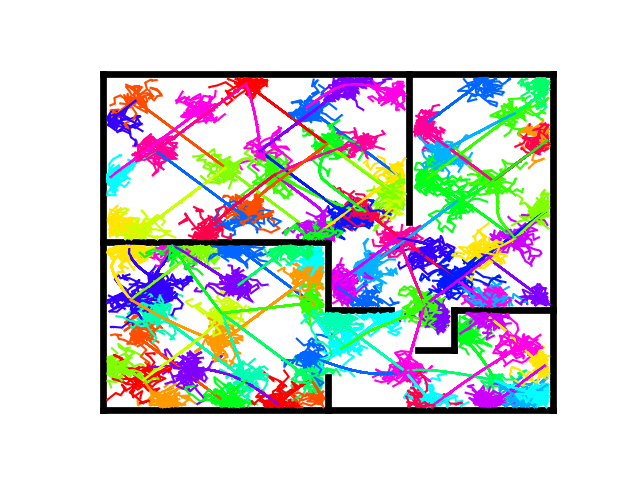}}
\subfloat[$T=H=20$]{\includegraphics[width=0.2\linewidth,height=0.2\linewidth,trim
=70 70 70 70,clip]{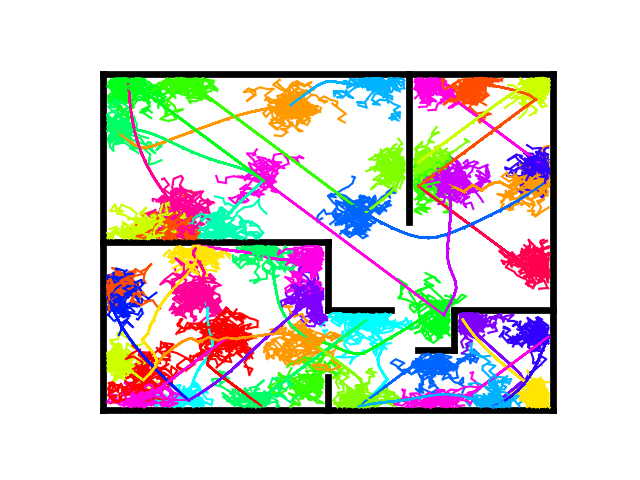}}
\subfloat[$T=H=30$]{\includegraphics[width=0.2\linewidth,height=0.2\linewidth,trim
=70 70 70 70,clip]{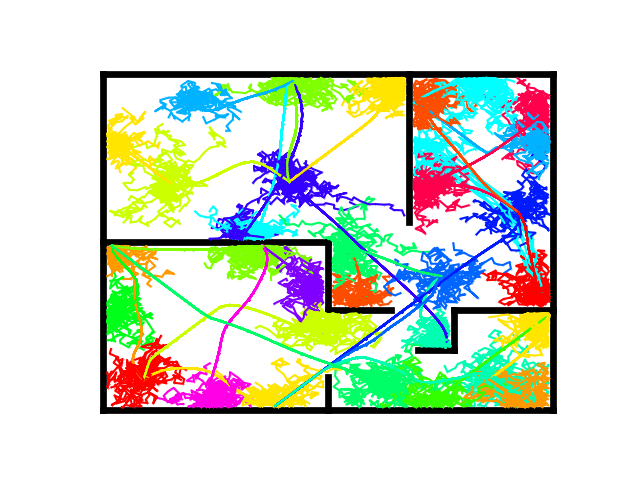}}
\subfloat[$T=H=40$]{\includegraphics[width=0.2\linewidth,height=0.2\linewidth,trim
=70 70 70 70,clip]{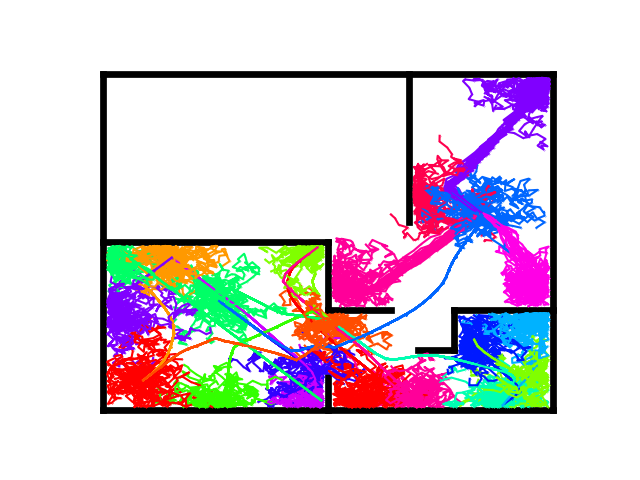}}
\subfloat[$T=H=50$]{\includegraphics[width=0.2\linewidth,height=0.2\linewidth,trim
=70 70 70 70,clip]{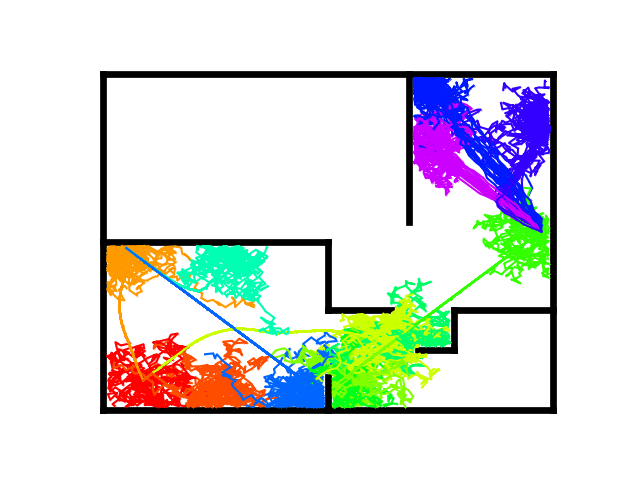}} 

\vspace{0.1in}
\subfloat[$T=H=10$]{\includegraphics[width=0.2\linewidth,height=0.2\linewidth,trim
=70 70 70 70,clip]{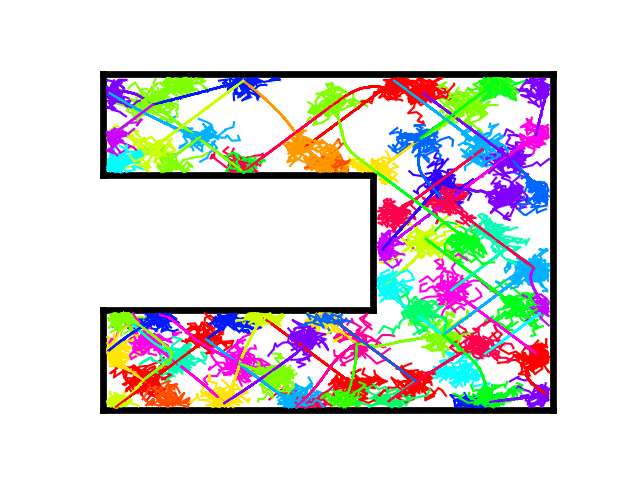}}
\subfloat[$T=H=20$]{\includegraphics[width=0.2\linewidth,height=0.2\linewidth,trim
=70 70 70 70,clip]{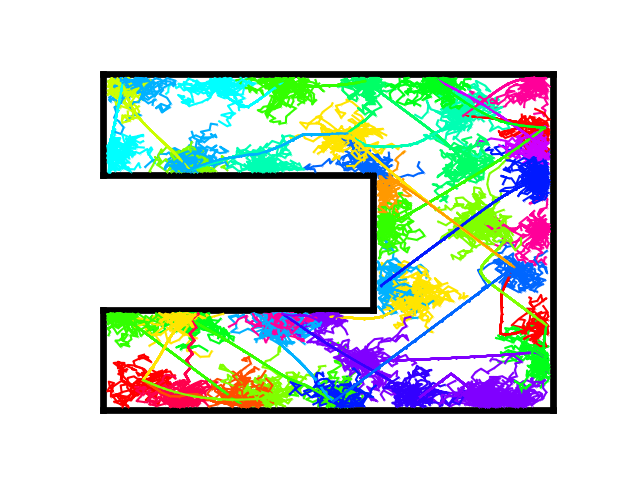}}
\subfloat[$T=H=30$]{\includegraphics[width=0.2\linewidth,height=0.2\linewidth,trim
=70 70 70 70,clip]{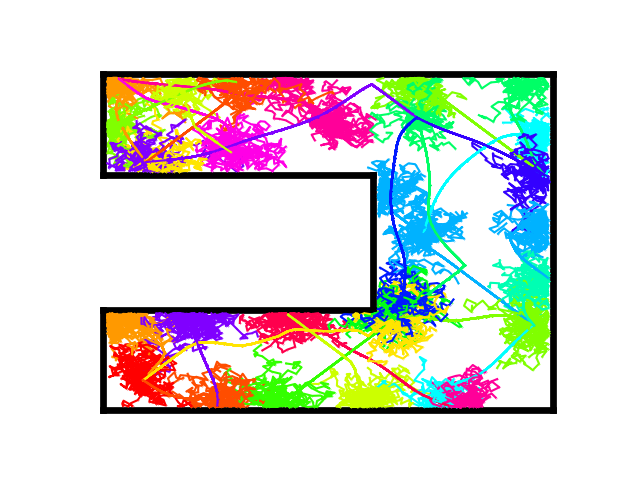}}
\subfloat[$T=H=40$]{\includegraphics[width=0.2\linewidth,height=0.2\linewidth,trim
=70 70 70 70,clip]{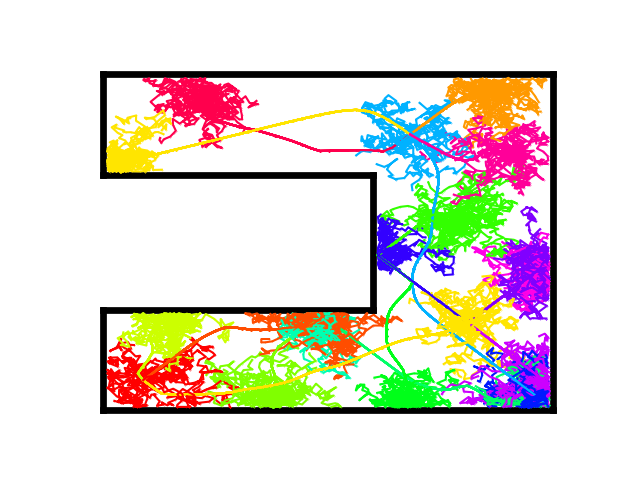}}
\subfloat[$T=H=50$]{\includegraphics[width=0.2\linewidth,height=0.2\linewidth,trim
=70 70 70 70,clip]{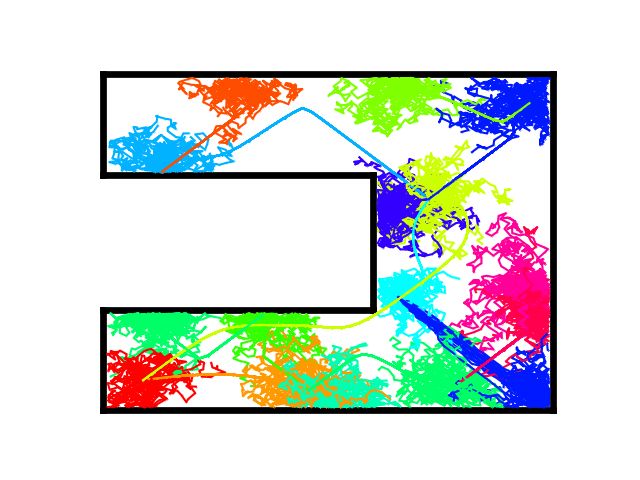}} 

\begin{minipage}{0.45\linewidth}
\vspace{0.1in}
\caption{Ablation on the length of \ALGO policies $(T, H)$: Visualization of the policies learned on the Bottleneck Maze \textit{(top)} and the U-Maze \textit{(bottom)} for different values of $T, H$. \textit{(Right table)} Coverage values (according to the same procedure as in Table \ref{table_cov_mazes}). Recall that $T$ and $H$ denote respectively the lengths of the directed skill and of the diffusing part of an \ALGO policy.}
\label{fig:ablation_T_H}
\end{minipage}%
\hfill%
\begin{minipage}{0.51\linewidth}
\flushright
\renewcommand*{\arraystretch}{1.1}
\begin{tabular}{ |c|c|c| } 
\hline
\ALGO & \hspace{-0.1in}Bottleneck Maze\hspace{-0.1in} & \hspace{-0.15in}U-Maze\hspace{-0.15in} \\
 \hline
$T=H=10$ & $85.67$ ~ {\scriptsize $(\pm 1.93)$} & $71.33$ ~ {\scriptsize $(\pm 0.42)$}  \\
  \hline
  $T=H=20$ & $87.33$ ~ {\scriptsize $(\pm 0.42)$} & $67.67$ ~ {\scriptsize $(\pm 1.50)$} \\
  \hline
  $T=H=30$ & $77.33$ ~ {\scriptsize $(\pm 3.06)$} & $68.33$ ~ {\scriptsize $(\pm 0.83)$} \\
  \hline
  $T=H=40$ & $59.67$ ~ {\scriptsize $(\pm 1.81)$} & $57.33$ ~ {\scriptsize $(\pm 0.96)$} \\
  \hline
  $T=H=50$ & $51.67$ ~ {\scriptsize $(\pm 0.63)$} & $58.67$ ~ {\scriptsize $(\pm 1.26)$} \\
\hline
\end{tabular}
\end{minipage}
\end{figure}

\subsection{Fine-tuning Results on Half-Cheetah and Walker2d}

In Sect.\,\ref{sec_experiments}, we reported the fine-tuning results on Ant. We now focus on Half-Cheetah and Walker2d, and similarly observe that \ALGO largely outperforms the baselines:

\begin{tabular}{ |c|c|c|c| } 
\hline
 & \ALGO & \TDthree & \DIAYN \\
\hline
Half-Cheetah & 174.93 ~ {\scriptsize $(\pm 1.45)$} & 108.67 ~ {\scriptsize $(\pm 25.61)$} & 0.0 ~ {\scriptsize $(\pm 0.0)$} \\
\hline
Walker2d & 46.29 ~ {\scriptsize $(\pm 3.09)$} & 14.33 ~ {\scriptsize $(\pm 1.00)$} & 2.13 ~ {\scriptsize $(\pm 0.74)$} \\
\hline
\end{tabular}

We note that the fine-tuning experiment on Half-Cheetah exactly corresponds to the standard Sparse-Half-Cheetah task, by rewarding states where the x-coordinate is larger than $15$. Meanwhile, Sparse-Walker2d provides a reward as long as the x-coordinate is larger than $1$ and the agent is standing up. Our fine-tuning task on Walker2d is more challenging as we require the x-coordinate to be larger than $4$. Yet the agent can be rewarded even if it is not standing up, since our downstream task is purely goal-reaching. We indeed interestingly noticed that \ALGO may reach the desired goal location yet not be standing up (e.g., by crawling), which may occur as there is no incentive in \ALGO to remain standing up, only to be discriminable.

\end{document}